\NewDocumentCommand{\domain}{g}{%
	\IfNoValueTF{#1}
	{\ensuremath{\mathcal{D}}}
	{\ensuremath{\mathcal{D}(#1)}}
}
\DeclareRobustCommand{\seq}[1]{%
	\ensuremath{\scaleobj{.85}{\overrightarrow{#1}}}%
}
\newcommand{\precIn}[1]{
	\ensuremath{\mathrel{
			\overset{
                \count0=\fam\mbox{$\fam=\count0 \scriptstyle#1$}  
			}{
				\prec
			}
	}}
}
\newcommand{\preceqIn}[1]{
	\ensuremath{\mathrel{
			\overset{
				 \count0=\fam\mbox{$\fam=\count0 \scriptstyle#1$}
			}{
				\preceq
			}
	}}
}
\newcommand{\followedIn}[1]{
	\ensuremath{
		\xrightarrow{ #1}
	}
}
\newcommand{\notFollowedIn}[1]{%
	\ensuremath{
		\not{\xrightarrow{#1}}
	}
}
\newcommand{\singleDerives}[1]{
	\ensuremath{\mathrel{
			\underset{
				\scriptstyle#1
			}{
				\longmapsto
			}
	}}
}
\newcommand{\first}{\alpha}
\newcommand{\last}{\omega}
\newcommand{\predset}[2]{
	\ensuremath{
		{#1}_{ #2}^{-}
	}
}
\newcommand{\succset}[2]{
	\ensuremath{
		{#1}_{ #2}^{+}
	}
}
\newcommand{\predecessors}[1]{
    \predset{E}{#1}
}
\newcommand{\successors}[1]{
    \succset{E}{#1}
}
\newcommand{\directPredecessor}[1]{
    \predset{\mathbf{s}}{#1}
}
\newcommand{\directSuccessor}[1]{
    \succset{\mathbf{s}}{#1}
}
\newcommand{\pick}[1]{
	\ensuremath{
		\bm{#1}^+
	}
}
\newcommand{\drop}[1]{
	\ensuremath{
		\bm{#1}^-
	}
}
\newcommand{\nodeRequired}[2]{
	\ensuremath{
		\mathcal{R}_{#2}{(#1)}
	}
}
\newcommand{\loadAt}[1]{
	\ensuremath{
		\bm{l}_{#1}^+
	}
}
\newcommand{\loadBefore}[1]{
	\ensuremath{
		\bm{l}_{#1}^-
	}
}
\newcommand{\loadAfter}[1]{
	\ensuremath{
        \bm{l}_{#1}^{#1 +1}
	}
}
\newcommand{\extDom}{
	\ensuremath{
		\mathcal{D}
	}
}
\newcommand*\mcircBigger[1]{\text{\raisebox{.1ex}{\scalebox{1.0}{\textcircled{\scriptsize #1}}}}}
\newcommand{\extDomS}[1]{
	\ensuremath{
		\mathcal{D}^{\mcircBigger{s}}(#1)
	}
}
\newcommand{\extDomR}[1]{
	\ensuremath{
		\mathcal{D}^{\mcircBigger{r}}(#1)
	}
}
\newcommand{\extDomX}[1]{
	\ensuremath{
		\mathcal{D}^{\mcircBigger{x}}(#1)
	}
}
\newcommand{\extDomF}[1]{
	\ensuremath{
		\mathcal{D}^{\mcircBigger{\raisebox{-0.0ex}{{n}}}}(#1)
	}
}
\newcommand{\lbSeq}{
	\ensuremath{
		\seq{s}
	}
}
\newcommand{\forbiddenInsertions}[1]{
	\ensuremath{
		\mathcal{I}^{\mcircBigger{\raisebox{-0.2ex}{f}}}(#1)
	}
}
\newcommand{\NB}{\ensuremath{N}}
\newcommand{\NBR}{\ensuremath{\hat{N}}}
\newcommand{\PreserveBackslash}[1]{\let\temp=\\#1\let\\=\temp}
\newcolumntype{C}[1]{>{\PreserveBackslash\centering}p{#1}}
\newcolumntype{R}[1]{>{\PreserveBackslash\raggedleft}p{#1}}
\newcolumntype{L}[1]{>{\PreserveBackslash\raggedright}p{#1}}
\definecolor{removed}{HTML}{D55E00}
\definecolor{member}{HTML}{56B4E9}
\definecolor{required}{HTML}{009E73}
\definecolor{possible}{HTML}{E69F00}
\definecolor{removing}{HTML}{CC79A7}
\definecolor{color1}{HTML}{D55E00}
\definecolor{color2}{HTML}{56B4E9}
\definecolor{color3}{HTML}{E69F00}
\definecolor{color4}{HTML}{009E73}
\definecolor{orangeC}{HTML}{E69F00}
\definecolor{skyBlueC}{HTML}{56B4E9}
\definecolor{bluishGreenC}{HTML}{009E73}
\definecolor{yellowC}{HTML}{F0E442}
\definecolor{blueC}{HTML}{0072B2}
\definecolor{vermilionC}{HTML}{D55E00}
\definecolor{reddishPurpleC}{HTML}{CC79A7}
\DeclareFontFamily{U}{mathb}{\hyphenchar\font45}
\DeclareFontShape{U}{mathb}{m}{n}{
<-6> mathb5 <6-7> mathb6 <7-8> mathb7
<8-9> mathb8 <9-10> mathb9
<10-12> mathb10 <12-> mathb12
}{}
\newcolumntype{M}{>{\centering\arraybackslash}m{1cm}}
\renewcommand\nomgroup[1]{%
  \item[\bfseries
  \ifstrequal{#1}{S}{Symbols}{%
  \ifstrequal{#1}{N}{Notations}{}}%
]}
\begin{document}


\RUNAUTHOR{A. Delecluse, P. Schaus, P. Van Hentenryck}

\RUNTITLE{Sequence Variables}

\TITLE{Sequence Variables: A Constraint Programming Computational Domain for Routing and Sequencing}

\ARTICLEAUTHORS{%
\AUTHOR{Augustin Delecluse \orcidlink{0000-0001-6285-6515}}
\AFF{KU Leuven, \EMAIL{augustin.delecluse@kuleuven.be}, \URL{}}
\AUTHOR{Pierre Schaus \orcidlink{0000-0002-3153-8941}}
\AFF{UCLouvain, \EMAIL{pierre.schaus@uclouvain.be}, \URL{}}
\AUTHOR{Pascal Van Hentenryck \orcidlink{0000-0001-7085-9994}}
\AFF{GATECH, \EMAIL{pascal.vanhentenryck@isye.gatech.edu}, \URL{}}

} 

\ABSTRACT{
Constraint Programming (CP) offers an intuitive, declarative framework for modeling Vehicle Routing Problems (VRP), yet classical CP models based on successor variables cannot always deal with optional visits or insertion based heuristics.
To address these limitations, this paper formalizes sequence variables within CP.
Unlike the classical successor models, this computational domain handle optional visits and support insertion heuristics, including insertion-based Large Neighborhood Search.
We provide a clear definition of their domain, update operations, and introduce consistency levels for constraints on this domain.
An implementation is described with the underlying data structures required for integrating sequence variables into existing trail-based CP solvers.
Furthermore, global constraints specifically designed for sequence variables and vehicle routing are introduced.
Finally, the effectiveness of sequence variables is demonstrated by simplifying problem modeling and achieving competitive computational performance on the Dial-a-Ride Problem.
}


\KEYWORDS{Constraint Programming, Sequence Variables, Vehicle Routing, Insertion, Large Neighborhood Search}
\HISTORY{}

\maketitle

%


\section{Introduction}
\label{sec:introduction}

The Vehicle Routing Problem (VRP) has many variants \citep{braekers2016vehicle}. Constraint Programming (CP) is a widely used approach for solving them \citep{kilby2006vehicle}, as its declarative modeling paradigm allows for flexible adaptation to constraints and objectives.
However, existing CP approaches struggle to handle optional visits and do not support insertion-based search strategies, which are crucial to quickly obtain high-quality solutions.
To address this, we introduce a sequence-based computational domain that enables both optional visits and insertion-based searches. As a motivating example, we consider the Dial-A-Ride Problem (DARP).
The DARP is to schedule a fleet of $K$ vehicles to fulfill a set of transportation requests $R$, where each request specifies a pickup location and a drop-off location. The objective is to minimize the total distance traveled while respecting various constraints, such as vehicle capacity, time windows, and user ride-time limits.
The model and the insertion-based search are given next, illustrating the \emph{CP = model + search} paradigm.
The precise semantics of each constraint does not need to be understood in detail at this stage.

\paragraph{The model}
The declarative CP model is
\begin{equation}
    \min \sum_{k \in K}{\textbf{\textit{Dist}}_k} \label{eq:darp-objective}
\end{equation}
subject to:
\begin{align}
    & \mathrm{Distance}(\textbf{\textit{Route}}_k, \bm{d}, \textbf{\textit{Dist}}_k) &\quad \forall k \in K \label{eq:darp-distance} \\
    & \mathrm{TransitionTimes}(\textbf{\textit{Route}}_k, (\textbf{\textit{Time}}), \bm{s}, \bm{d}) &\quad \forall k \in K \label{eq:darp-transition}\\
    & \mathrm{Cumulative}(\textbf{\textit{Route}}_k, \pick{r}, \drop{r}, \bm{q}, c) &\quad \forall k \in K \label{eq:darp-cumulative}\\
    & \sum_{k \in K} \nodeRequired{\textbf{\textit{Route}}_k}{v} = 1 &\quad \forall v \in V \label{eq:darp-disjoint}\\
    & \textbf{\textit{Time}}_{\drop{r_i}} - \textbf{\textit{Time}}_{\pick{r_i}} - \bm{s}_{\pick{r_i}} \leq \bm{t}_i &\quad \forall r_i \in R \label{eq:darp-max-ride-time}\\
    & \textbf{\textit{Time}}_{\last_k} - \textbf{\textit{Time}}_{\first_k} \leq \bm{t}^d &\quad \forall k \in K \label{eq:darp-max-route-duration}
\end{align}
It relies on an insertion-based sequence variable $\textit{Route}_k$ for each vehicle $k \in K$.
It represents the sequence of nodes visited by the vehicle starting at the depot and ending at the depot.
The objective consists in minimizing the sum of the distances traveled \eqref{eq:darp-objective}.
The travel distance of each vehicle $k \in K$ is linked in \eqref{eq:darp-distance} to a variable $\textbf{\textit{Dist}}_k$, while constraint \eqref{eq:darp-transition} enforces visits during valid time windows.
The capacity available within a vehicle is constrained through \eqref{eq:darp-cumulative}.
The constraint \eqref{eq:darp-disjoint} ensures that every node is visited exactly once.
Lastly, \eqref{eq:darp-max-ride-time} and \eqref{eq:darp-max-route-duration} enforce the maximum ride time and the maximum duration of the route, respectively.

\paragraph{The search}
In Constraint Programming (CP), a backtracking depth-first search (DFS) is typically used. This search can be customized by defining a branching procedure, which is responsible for generating child nodes at each step.
In vehicle routing, two main search strategies are commonly used. The nearest neighbor heuristic sequentially appends the closest unvisited node to the current sequence. 
The insertion-based heuristic selects a node and inserts it in the position that minimizes the cost increase. 
As shown in~\cite{rosenkrantz1977analysis}, insertion-based construction heuristics are particularly effective at avoiding the long edge effect, where a very long final connection is required to close a tour, a common issue with nearest neighbor strategies.
An insertion-based branching procedure for the DARP is presented in Algorithm~\ref{alg:branching_darp}.
This procedure generates the alternative child nodes of a search node. Those are obtained by considering all the possible insertions of a chosen request in the different routes.

\begin{algorithm}[!ht]
\caption{Creation of the branching points for the DARP.}
\label{alg:branching_darp}
\If{$\bigwedge_{k \in K} \textbf{\textit{Route}}_k.\mathrm{isFixed}()$}{
    \Return solution \; \label{alg:branching-solution}
}
$\bm{r_i} \gets \underset{r_j \in R \mid r_j \text{ not yet inserted}}{\arg\!\min}{\; \sum_{k\in K} { \textbf{\textit{Route}}_k.\mathrm{nInsert}(\pick{r_j}) \times \textbf{\textit{Route}}_k.\mathrm{nInsert}(\drop{r_j})} }$  \label{alg:branching_get_request} \;
$\textit{branches} \gets \{\}$ \;
\For{$k \in K$}{
    $I^+ \gets \textbf{\textit{Route}}_k.\mathrm{getInsert}(\pick{r_i})$ \; \label{alg:branching:select-pickup-insert}
    \For{$p^+ \in I^+$}{
        $I^- \gets \textbf{\textit{Route}}_k.\mathrm{getInsertAfter}(\drop{r_i}, p^+) \cup \{ \pick{r_i} \}$ \label{alg:branching:select-drop-insert}\;
        \For{$p^- \in I^-$}{
            $\textit{branches} \gets \textit{branches} \cup \left\{( \textbf{\textit{Route}}_k.\mathrm{insert}(p^+, \pick{r_i}) \land \textbf{\textit{Route}}_k.\mathrm{insert}(p^-, \drop{r_i}) ) \right\}$ \label{alg:branching:perform-insertion}
        }
    }
}
sort $\textit{branches}$ by increasing heuristic value \label{alg:branching_sort} \;
\Return $\textit{branches}$ \;
\end{algorithm}

The branching proceeds in two steps.
First it prioritizes the unvisited request $\bm{r_i}$ with the fewest remaining feasible insertions (line~\ref{alg:branching_get_request}).
This is in line with the \emph{first-fail} principle, which aims to create the shallowest possible search tree.
Given this request, all insertion points $I^+$ for its pickup $\pick{r_i}$ in a given sequence variable $\textbf{\textit{Route}}_k$ are retrieved (line \ref{alg:branching:select-pickup-insert}).
Suitable insertions $I^-$ for its delivery $\drop{r_i}$ in the current path are also retrieved, including the pickup $\pick{r_i}$ as a predecessor candidate (line \ref{alg:branching:select-drop-insert}).
All valid combinations of predecessors are considered as an alternative decision to explore (line \ref{alg:branching:perform-insertion}).
Since DFS is used, the most promising insertions should be explored first. 
This can, for instance, be achieved by sorting all candidate insertions based on their impact on tour length, prioritizing those that minimize the increase in distance (line~\ref{alg:branching_sort}).
A solution is reached when no further insertions are possible in any vehicle: all paths are fixed (line~\ref{alg:branching-solution}).

\begin{example}
The example refers to Figure \ref{fig:darp-branching} with only one vehicle. A route serves both visits of $r_1$. The request $r_2$ is selected, resulting in six possible sequences for inserting its two visits, each corresponding to a possible branching decision in the search tree. These insertion options are sorted by their cost.
\begin{figure}[!ht]
    \FIGURE{
        \input{figures/darp-branching-color}
    }{
        Paths generated by Algorithm \ref{alg:branching_darp} from an initial one (left). 
        Numbers show which are considered first.
        \label{fig:darp-branching}
    }{}  
\end{figure}
\end{example}

A DFS using the branching of Algorithm~\ref{alg:branching_darp} can be used to find an initial solution by stopping at the first feasible solution encountered.
A limited discrepancy DFS search, keeping only the leftmost few branches, as was done in \cite{lnsffpa} can also be used to quickly discover good solutions.
It can also be used into a Large Neighborhood Search (LNS) strategy, as introduced in \cite{shaw1998using}, whose main iteration is depicted in Algorithm \ref{alg:lns_darp}.
A set of requests to relax from a previous solution $S$ is first selected (line \ref{alg:lns_darp:select_request}).
The paths represented in the previous solution are then enforced, except for nodes belonging to relaxed requests, which are omitted (line \ref{alg:lns_darp:insert_previous_path}).
A search is then performed (line \ref{alg:lns_darp:launch_search}) to insert those remaining relaxed requests, leading to a new solution $\hat{S}$.
This process is repeated until a given stopping criterion is met.

\begin{algorithm}[!ht]
\caption{A LNS iteration for DARP}
\label{alg:lns_darp}
\SetKwInput{Input}{Input}
\Input {$S=[S_1,\ldots,S_K]$: a sequence of visits for each vehicle}
$\mathcal{R} \gets $ relaxedRequests(S) \label{alg:lns_darp:select_request}\;
$\mathcal{V}^\mathcal{R} \gets \bigcup_{r_i \in \mathcal{R }} \{r_i^+,r_i^-\}$ \label{alg:lns_darp:relaxedNodes}\;
\For{$k \in K$}{
    $\textbf{\textit{Route}}_k \gets $ empty sequence variable \tcp{empty the vehicle path}
    \For(\tcp*[h]{iterate over the previous path $S_k$, in order}){$v \in S_k$ }{
    
        \If{$v \notin \mathcal{V}^\mathcal{R} $}{
            $\textbf{\textit{Route}}_k.\mathrm{insertAtEnd}(v)$ \tcp{append the visit in same order as in $S$}  \label{alg:lns_darp:insert_previous_path}
        }
    }
}
$S \gets \text{Solve problem starting with }\textbf{\textit{Route}}$  \tcp{Using branching from Algorithm \ref{alg:branching_darp}}\label{alg:lns_darp:launch_search}
\end{algorithm}

\subsection{Limitations of Existing Approaches in CP}

The model and search strategy introduced for the DARP in the previous section rely on insertion-based sequence variables.
However, the most common approach for modeling a VRP in CP is based on the successor model, which does not support insertion-based search strategies.
A successor model uses an integer variable $\textit{succ}_i$ for representing the direct successor of customer $i$ in a tour.
To ensure that all customers are visited on a tour, the (Hamiltonian) \emph{circuit} global constraints \citep{Lauriere1978} and its weighted variants \citep{benchimol2012improved} can be used.
This kind of model has two main limitations.
First, representing the optional nature of visits with the successor model is not straightforward\footnote{Some modeling languages \citep{minizinc, xcsp3, opl} offer a subcircuit version that allows a node to be excluded from the circuit by assigning it a self-loop ($\textit{succ}_i = i$) but this complicates the model semantics, as constraints need to be aware that self-loops are permitted and designate an unvisited node}.
Second, at the search level, the first solution along the leftmost branch of the search tree typically relies on a \emph{nearest neighbor} heuristic.
These nearest neighbor heuristics tend to add small edges near the root of the search tree, but as decisions progress, they tend to add very long edges at the end, making it difficult to quickly find good solutions with CP.

An advanced CP alternative to the successor model, which enables dealing with optional visits more naturally, is to use the \emph{head-tail sequence variables} implemented in IBM CP Optimizer \citep{ibm_ilog_cp,Reasoning_with_Conditional_Time_Intervals_2}.
According to the documentation and in the context of VRP, the domain consists of two growing sequences of nodes. One is the head (prefix) and the other the tail (suffix) of nodes to be visited in that order.
The domain update operations are to append a node to the tail, at the head, or merging the two to form the final sequence. 
These variables handle optionality by design since not every node needs to be added in the sequence. 
However, similarly to the successor model, a heuristic on those variables would also rely on a nearest neighbor strategy.

Since both the successor and the head-tail sequence models do not easily support insertion heuristics, a new type of variable, the insertion-based sequence variable, was recently introduced in \citep{thomas2020insertion, delecluse2022sequence} to address these limitations.
These variables have a domain composed of the possible insertions for each node within a partial sequence. 
They thus consume more space than the head-tail sequence variables.
A summary of the main properties of the modeling variables for VRP in CP is given in Table \ref{tab:variables-comparison}.

\begin{table}[!ht]
    \TABLE{
        CP Variables characteristics.
        \label{tab:variables-comparison}
    }
    {\begin{tabular}{rccc}
    \toprule
    & \multicolumn{3}{c}{Type of variable} \\
    \cmidrule(lr){2-4} 
        Feature & Successor & Head-tail sequence & Insertion-based sequence \\
        \midrule
        Nearest neighbor heuristics & \checkmark & \checkmark & \checkmark \\
        Insertion based heuristics &  &  & \checkmark\\
        Optional visits & & \checkmark & \checkmark\\[2pt]
        \begin{tabular}{@{}c@{}}Memory complexity on VRP \\ with $n$ nodes and $k$ vehicles\end{tabular}
  & $\mathcal{O}(n^2)$ & $\mathcal{O}(k \cdot n)$ & $\mathcal{O}(k \cdot n^2)$\\
    \end{tabular}}
    {}
\end{table}

In the rest of the article, insertion-based sequence variables are simply denoted \emph{sequence variables}.
This work significantly extends the previous work on sequence variables from \cite{thomas2020insertion, delecluse2022sequence} by:

\begin{itemize}
\item \textbf{Formalizing the computational model for sequence variables}, providing a robust theoretical foundation.
\item \textbf{Introducing consistency levels}, including the novel concept of \emph{insert consistency}.
\item \textbf{Proposing an implementation} along with the underlying data structures to integrate sequence variables into existing trail-based CP solvers.
\item \textbf{Developing global constraints} for modeling VRP with sequence variables.
\item \textbf{Demonstrating the effectiveness of sequence variables} on the Dial-a-Ride problem.
\end{itemize}

This paper is organized as follows.
Section \ref{section:related} provides a brief overview of VRP solving with CP, and reviews work related to sequence variables.
Section \ref{section:sequence_variables} details the computational model for sequence variables, including domain representations, operations and consistency definitions.
Section \ref{sec:global_constraints} discusses the global constraints tailored for sequence variables and their integration with vehicle routing problems.
Section \ref{section:search} describes search strategies to efficiently solve CP models involving sequence variables.
Finally, Section \ref{section:xp} presents experimental results on the Dial-A-Ride Problem, validating the theoretical contributions and practical implementations.

\section{Related Work}
\label{section:related}

Constraint Programming (CP) is a declarative paradigm where problems are modeled through variables and constraints. Solutions are then found via a systematic search combined with constraint propagation. 
In CP, each variable $x$ has an associated domain $D(x)$ of possible values, and constraints actively filter these domains by removing inconsistent values.
At each node of the search tree, a variable is selected, a value from its domain is chosen and the constraints are propagated until a fix-point is reached. The search backtracks if an inconsistency is detected.
We refer to \cite{minicp} for a more detailed background on CP solving.

The successor model described in \cite{kilby2006vehicle} is commonly used for modeling VRPs in CP, where each node is associated with an integer variable representing its next visited node.
An attempt to extend this model to support insertion-based search was proposed in \cite{kilby2000}, where a reversible data structure called \emph{insertion schedule} represents the direct predecessor and successor of the nodes already inserted by the branching procedure. A channeling mechanism is then used to maintain consistency with a new set of variables, called the \emph{insertion variables $\pi$}, which represent the possible insertion positions for visit $i$.
Although this approach enables advanced insertion procedures, such as ours, it does not provide a well-defined sequence variable with a computation domain that can be exploited to filter external constraints.

Another CP alternative relies on dedicated structured domains.
They have been successfully introduced for representing set variables in \citep{gervet1997interval,puget1993set} and graph variables in \citep{dooms2005cp}.
These two domains rely on the concept of subset-bounds, which maintain a lower bound of mandatory elements and an upper bound of potential elements in the set or graph.
In \cite{pesant1997genius}, the authors propose an extension of the successor model by introducing, for each visit $i$, two sets: $\mathcal{A}_i$ and $\mathcal{B}_i$, representing respectively the set of (not necessarily direct) predecessors and successors in the tour. These sets are maintained through channeling constraints with the successor and predecessor variables.
This formulation makes it possible to add precedence constraints to the model and, by introducing them dynamically, one can also implement insertion-based search strategies. However, as with the \emph{insertion variables} $\pi$ of \cite{kilby2000}, this approach does not provide a well-defined sequence variable with a computation domain that can be directly exploited by external constraints.

The closest related work to the insertion-based sequence variable domain presented in this paper is the head-tail sequence variable explained in the previous section and introduced in \citep{ibm_ilog_cp,Reasoning_with_Conditional_Time_Intervals_2}.
A similar approach is implemented in OR-Tools by Google \citep{ortools}, which also provides sequence variable modeling. 
While primarily targeted at scheduling problems, these sequence variables have been successfully applied to solving hybrid routing scheduling problems in works such as \citep{liu2018modelling,10.1007/978-3-319-98334-9_32}. 
A related idea of growing prefix sequences was utilized for path variable representation in traffic engineering problems in the field of computer networks \citep{hartert2015solving}.

The approach proposed in this paper is more flexible, allowing elements to be inserted at arbitrary positions of the sequence. This insertion capability can be viewed as a domain implementation of the insertion graph concept introduced in \cite{bent2004two}. 
This domain representation was previously introduced in \citep{thomas2020insertion}, along with its simplified variant that excludes required visits, as detailed in \citep{delecluse2022sequence}. 
This paper builds upon and refines these earlier works, providing a cleaner and more detailed version. 
It formally defines the computation domain and its semantics, and introduces the notion of consistency.

\section{Sequence Variable}
\label{section:sequence_variables}
We first introduce some notations before defining the domain of sequence variables.
\paragraph{Notations} 
A sequence $\seq{S}$ is defined as an ordered set of nodes belonging to a graph, without repetition. 
Let $\seq{S}$ be a sequence with the form $\seq{S} = \seq{S}_1 \cdot v_1 \cdot \seq{S}_2$ ($\seq{S}_1$ and $\seq{S}_2$ being sequences, possibly empty). 
An insertion operation $\singleDerives{(v_1, v_2)}$ defined by the pair of nodes $(v_1, v_2)$ produces a new sequence $\seq{S}' = \seq{S}_1 \cdot v_1 \cdot v_2 \cdot \seq{S}_2$. 
We denote with $\seq{S} \subset \seq{S}'$ that the sequence $\seq{S}$ is a subsequence from $\seq{S}'$, where $\seq{S}'$ preserves the order from $\seq{S}$ and has strictly more nodes. 
Then $\seq{S}'$ is called a super sequence of $\seq{S}$.
If the two sequences may be the same, the relation is written $\seq{S} \subseteq \seq{S}'$.
We denote $v_i \followedIn{\seq{S}} v_j$ to indicate that $v_i$ directly precedes $v_j$ in the sequence $\seq{S}$ and $v_i \precIn{\seq{S}} v_j$ when $v_i$ precedes (not necessarily directly) $v_j$ in $\seq{S}$.
Those relations are simply written $v_i \followedIn{} v_j$ and $v_i \precIn{} v_j$ when clear from the context.  
If the nodes can be the same, the relation is written $v_i \preceq v_j$. 
Given a sequence $\seq{S} = v_1 \ldots v_i \ldots v_n $, $\mathrm{prefix}(\seq{S},v_i) = v_1 \ldots v_i$ and $\mathrm{suffix}(\seq{S},v_i) = v_i \ldots v_n$.
Lastly, given a node set $V$, a start node $\first \in V$ and an end node $\last \in V$ , $\mathcal{P}(V)$ denotes all sequences $\seq{S}$ over a subset of nodes $V$, starting at node $\first \in V$ and ending at node $\last \in V$.

\paragraph{Domain}

A sequence domain denoted $\extDom \subseteq \mathcal{P}(V)$ contains a set of sequences over a subset of the nodes $V$ without repetition, starting at node $\first \in V$ and ending at node $\last \in V$.
Four elementary restrictions can be imposed on a sequence domain:
\begin{enumerate}
	\item \textbf{Require} a node to be visited, without explicitly stating the position of the node.
	\item \textbf{Exclude} a node from being visited.
	\item \textbf{Enforce a subsequence} $\seq{S}'$ to be included (i.e., enforce $\seq{S}' \subseteq \seq{S}$).
	\item \textbf{NotBetween}, which forbids a node $v_2$ to be placed between two other nodes $v_1$ and $v_3$ regardless of any additional nodes that may be present between them.
\end{enumerate}
Those restrictions were chosen to enable branching decisions, to allow for large neighborhood searches, and to facilitate domain filtering through constraints.

To define the sequence domain $\extDom$, we first define one subdomain for each of the four possible restrictions:

\begin{enumerate}
	\item $\extDomR{R}$ is specified by a set of required nodes $R \subseteq V$. 
	It denotes all sequences including all nodes in $R$.
	More formally,
	\begin{equation}
		\extDomR{R} = \left\{ \seq{S} \mid \forall v \in R : v \in \seq{S} \right\}
	\end{equation}
	
	\item $\extDomX{X}$ is specified by a set of excluded nodes $X \subseteq V$.
	It denotes all sequences where no node in $X$ is included.
	More formally, 
	\begin{equation}
		\extDomX{X} = \left\{ \seq{S} \mid \forall v \in X : v \notin \seq{S} \right\}
	\end{equation}

	\item $\extDomS{\seq{s}}$ is specified by a partial sequence of nodes $\seq{s}$. 
	It denotes all sequences $\seq{S}$ such that $\seq{s}$ is a subsequence of $\seq{S}$. 
	More formally, 
	\begin{equation}
		\extDomS{\seq{s}} = \left\{ \seq{S} \mid \seq{s} \subseteq \seq{S} \right\}
	\end{equation}
	
	\item $\extDomF{\NB}$ is specified by a set of NotBetween triples $\NB \subseteq V \times V \times V$.
	$\extDomF{\NB}$ denotes all sequences in which no NotBetween triple from $\NB$ appears.
	More formally,
	\begin{equation}
		\extDomF{\NB} = \left\{ \seq{S} \mid \forall (v_i \cdot v_j \cdot v_k) \in \NB : (v_i \cdot v_j \cdot v_k) \not \subseteq \seq{S} \right\}
	\end{equation}
\end{enumerate}

The intersection of those four subdomains defines the sequence domain.

\begin{definition}
	A sequence domain is defined as 
    \begin{equation}
		\mathcal{D}(R, X, \seq{s}, \NB) = \extDomR{R} \cap \extDomX{X} \cap \extDomS{\seq{s}} \cap \extDomF{\NB}
	\end{equation}
\end{definition}

\begin{definition}
	$\mathcal{D}(R, X, \seq{s}, \NB)$ is said to be fixed when $|\mathcal{D}(R, X, \seq{s}, \NB)|=1$.
\end{definition}

By abuse of notations, $\mathcal{D}(R, X, \seq{s}, \NB)$ will be simply written $\extDom$ in the following.

\begin{example}
    \label{ex:domain-definition}
    $ V= \{\first,v_1,v_2,v_3,\last\}$, $R = \{ \first, \last, v_2\}$, $X = \{ v_3 \}$, $\seq{s} = \first \cdot v_1 \cdot \last$, $\NB = \{ (\first \cdot v_2 \cdot v_1), (v_3 \cdot v_1 \cdot v_2) \}$.
    $\extDom = \extDomR{R} \cap \extDomX{X} \cap \extDomS{\seq{s}} \cap \extDomF{\NB} = \{ (\first \cdot v_1 \cdot v_2 \cdot \last) \}$.
    The compositions of the subdomains are represented in Table \ref{tab:sequence-domain-breakdown}. 
    Given that $|\extDom|=1$, the domain is fixed.
    \begin{table}[!ht]
    \begin{small}
    \centering
    \begin{minipage}[t]{0.48\linewidth}
    \raggedright
    \begin{tabular}{rcccc}
    \toprule
    $\mathcal{P}(V)$ & $\extDomR{R}$ & $\extDomX{X}$ & $\extDomS{\seq{s}}$ & $\extDomF{\NB}$ \\
    \midrule
    $\first \cdot \last$                               &            & \checkmark &            & \checkmark \\
    $\first \cdot v_1 \cdot \last$                     &            & \checkmark & \checkmark & \checkmark \\
    $\first \cdot v_2 \cdot \last$                     & \checkmark & \checkmark &            & \checkmark \\
    $\first \cdot v_3 \cdot \last$                     &            &            &            & \checkmark \\
    $\bm{\first \cdot v_1 \cdot v_2 \cdot \last}$      & $\bm{\checkmark}$ & $\bm{\checkmark}$ & $\bm{\checkmark}$ & $\bm{\checkmark}$ \\
    $\first \cdot v_1 \cdot v_3 \cdot \last$           &            &            & \checkmark & \checkmark \\
    $\first \cdot v_2 \cdot v_1 \cdot \last$           & \checkmark & \checkmark & \checkmark &            \\
    $\first \cdot v_2 \cdot v_3 \cdot \last$           & \checkmark &            &            & \checkmark \\
    \bottomrule
    \end{tabular}
    \end{minipage}\hfill
    \begin{minipage}[t]{0.51\linewidth}
    \raggedleft
    \begin{tabular}{rcccc}
    \toprule
    $\mathcal{P}(V)$ & $\extDomR{R}$ & $\extDomX{X}$ & $\extDomS{\seq{s}}$ & $\extDomF{\NB}$ \\
    \midrule
    $\first \cdot v_3 \cdot v_1 \cdot \last$           &            &            & \checkmark & \checkmark \\
    $\first \cdot v_3 \cdot v_2 \cdot \last$           & \checkmark &            &            & \checkmark \\
    $\first \cdot v_1 \cdot v_2 \cdot v_3 \cdot \last$ & \checkmark &            & \checkmark & \checkmark \\
    $\first \cdot v_1 \cdot v_3 \cdot v_2 \cdot \last$ & \checkmark &            & \checkmark & \checkmark \\
    $\first \cdot v_2 \cdot v_1 \cdot v_3 \cdot \last$ & \checkmark &            & \checkmark &            \\
    $\first \cdot v_2 \cdot v_3 \cdot v_1 \cdot \last$ & \checkmark &            & \checkmark &            \\
    $\first \cdot v_3 \cdot v_1 \cdot v_2 \cdot \last$ & \checkmark &            & \checkmark &            \\
    $\first \cdot v_3 \cdot v_2 \cdot v_1 \cdot \last$ & \checkmark &            & \checkmark &            \\
    \bottomrule
    \end{tabular}
    \end{minipage}
    \end{small}
        \caption{The compositions of subdomains in Example \ref{ex:domain-definition}.
        A check mark in one of the last 4 columns indicates that the corresponding sequence in the first column is included within the subdomain.}
        \label{tab:sequence-domain-breakdown}
    \end{table}
\end{example}

Domain updates over $\extDom$ are defined by growing the sets $R$, $X$ and $\NB$, adding more elements to them, or by inserting a node within the partial sequence $\seq{s}$.
Notice that elementary restrictions can be combined to enforce a \emph{between} restriction, that is forcing a node $v_2$ to be visited between nodes $v_1, v_3$. 
This can be achieved by adding $(\first \cdot v_2 \cdot v_1), (v_3 \cdot v_2 \cdot \last)$ to the NotBetween $\NB$ given that any sequence $\seq{s} \in \extDom$ begins at $\first$ and ends at $\last$. 
Moreover, if node $v_2$ must be included in the sequences of the domain, $v_2$ can be added to the required nodes $R$.

Each sequence variable $\seq{S}$ is associated with a sequence domain, and represents an unknown sequence.
Such a variable is particularly convenient for modeling a route from an origin node $\first$ to a destination node $\last$ in a VRP, where the nodes visited and their ordering represents the path performed by a vehicle. 

We introduce a compact encoding of the domain $\extDom$ in the next section, that only consumes $\mathcal{O}(n^2)$ memory, with $n = |V|$, and enables efficient domain updates.

\section[An O(n\^2) encoding of the sequence variable domain]{An $\mathcal{O}(n^2)$ encoding of the sequence variable domain}

This section introduces a compact domain representation for the entire domain $\extDom(R, X, \seq{s}, \NB)$ as a set of pairs of nodes, requiring only \(\mathcal{O}(n^2)\) memory, with the limitation that 
for each NotBetween $(v_1 \cdot v_2 \cdot v_3) \in \NB$, the two extremities $v_1, v_3$ are within the partial sequence \(\seq{s}\).
This representation exploits the representation of the intersection of the subdomains.
First the representation of \(\extDomS{\seq{s}} \cap \extDomF{\NB}\) is introduced in Section \ref{sec:encoding-s-and-nb}, then it is extended with the intersection of the excluded nodes subdomain $\extDomX{X}$ in Section \ref{sec:encoding-x} and finally with the intersection of the required nodes subdomain $\extDomR{R}$ in Section \ref{sec:encoding-r}.

\subsection{Partial Sequence and NotBetween}
\label{sec:encoding-s-and-nb}
A naive representation of the set of NotBetween $\NB \subseteq V \times V \times V$ requires a cubic space complexity. 
This can be reduced to a quadratic one by representing directly $\extDomS{\seq{s}} \cap \extDomF{\NB}$ and by restricting $\NB$ to only contains NotBetween with the extremities belonging to the partial sequence $\seq{s}$ and in the order in which they appear in $\seq{s}$:
$\NB \subseteq \{ (v_1 \cdot v_2 \cdot v_3) \mid v_1 \precIn{\seq{s}} v_3 \}.$
The set of NotBetween with this restriction is written $\NBR$.
The set $\NBR$ can then be translated into a set of pairs of nodes $(v_i, v_2)$ corresponding to \textit{forbidden insertions} of $v_2$ just after $v_i$ in $\seq{S}$, due to a NotBetween:
\begin{equation}
    \forbiddenInsertions{\seq{s}, \NBR} = \{ (v_i, v_2) \mid (v_1 \cdot v_2 \cdot v_3) \in \NBR \land v_1 \preceqIn{} v_i \land v_i \precIn{} v_3 \} \label{eq:forbidden-insertions}
\end{equation}

\begin{theorem}
\label{theorem:encoding}
The forbidden insertions $\forbiddenInsertions{\seq{s}, \NBR}$ can be used instead of $\NBR$ in the representation of the domain $\extDomS{\seq{s}} \cap \extDomF{\NBR}$. 
\end{theorem}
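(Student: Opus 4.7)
The plan is to show that $\seq{s}$ together with $\forbiddenInsertions{\seq{s}, \NBR}$ characterizes the same set of sequences as $\seq{s}$ together with $\NBR$. The natural membership criterion induced by the forbidden insertions is: $\seq{S}$ lies in $\extDomS{\seq{s}} \cap \extDomF{\NBR}$ iff $\seq{s} \subseteq \seq{S}$ and no pair $(v_i, v_2) \in \forbiddenInsertions{\seq{s}, \NBR}$ is \emph{realized} in $\seq{S}$, where a pair is realized when $v_i$ is the latest node of $\seq{s}$ appearing strictly before $v_2$ in $\seq{S}$. I would then prove the equivalence between this criterion and the absence of any NotBetween triple from $\NBR$ as a subsequence of $\seq{S}$.

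I would first establish the direction ``a realized forbidden insertion implies a NotBetween violation''. Assuming $(v_i, v_2)$ is realized in $\seq{S}$, the definition of $\forbiddenInsertions$ yields a triple $(v_1 \cdot v_2 \cdot v_3) \in \NBR$ with $v_1 \preceqIn{\seq{s}} v_i \precIn{\seq{s}} v_3$. Since $\seq{s} \subseteq \seq{S}$, the chain $v_1 \preceqIn{\seq{S}} v_i \precIn{\seq{S}} v_2$ gives $v_1 \precIn{\seq{S}} v_2$. For $v_2 \precIn{\seq{S}} v_3$ I would introduce $w$, the immediate $\seq{s}$-successor of $v_i$: as $v_i$ is the last $\seq{s}$-node preceding $v_2$ in $\seq{S}$, $w$ must appear strictly after $v_2$; moreover $v_i \precIn{\seq{s}} v_3$ forces $w \preceqIn{\seq{s}} v_3$, hence $w \preceqIn{\seq{S}} v_3$. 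Combining these gives $v_2 \precIn{\seq{S}} v_3$, so $(v_1 \cdot v_2 \cdot v_3)$ is a subsequence of $\seq{S}$, contradicting $\seq{S} \in \extDomF{\NBR}$.

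For the reverse direction, I would again argue by contrapositive: suppose $(v_1 \cdot v_2 \cdot v_3) \in \NBR$ is a subsequence of $\seq{S}$, and let $v_i$ be the latest $\seq{s}$-node strictly before $v_2$ in $\seq{S}$. The restriction that $v_1, v_3 \in \seq{s}$ is crucial here: from $v_1 \precIn{\seq{S}} v_2$ with $v_1 \in \seq{s}$ I get $v_1 \preceqIn{\seq{s}} v_i$, and from $v_2 \precIn{\seq{S}} v_3$ with $v_3 \in \seq{s}$ I get $v_i \precIn{\seq{s}} v_3$ (otherwise $v_3 \preceqIn{\seq{s}} v_i$ would give $v_3 \preceqIn{\seq{S}} v_i \precIn{\seq{S}} v_2$, contradicting $v_2 \precIn{\seq{S}} v_3$). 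Thus $(v_i, v_2) \in \forbiddenInsertions{\seq{s}, \NBR}$ and it is realized in $\seq{S}$ by construction.

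The main technical point, and the spot where the hypothesis that every triple of $\NBR$ has its two extremities in $\seq{s}$ in the prescribed order is indispensable, is precisely this back-translation from ``$v_1 \prec v_2 \prec v_3$ in $\seq{S}$'' to statements about the order inside $\seq{s}$: without it the forbidden-insertion representation could not recover NotBetween triples whose endpoints are not yet anchored in the partial sequence. A minor boundary case worth dispatching separately is $v_2 \in \seq{s}$, which splits in two sub-cases: either $v_2$ lies between $v_1$ and $v_3$ in $\seq{s}$, in which case both the original and the forbidden-insertion descriptions make the domain empty, or $v_2$ lies outside $[v_1, v_3]$ in $\seq{s}$, in which case the NotBetween is trivially satisfied in every super-sequence of $\seq{s}$ and no pair associated with this triple is ever realized.
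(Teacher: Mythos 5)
Your proof is correct and follows the same route as the paper: your notion of a pair being \emph{realized} in $\seq{S}$ is exactly the paper's membership test via $\mathrm{prev}(v,\seq{s},\seq{S})$, the latest node of $\seq{s}$ preceding $v$ in $\seq{S}$. You in fact go further than the paper, whose proof merely asserts the resulting set equality without argument, whereas you supply both directions of the equivalence and flag the $v_2 \in \seq{s}$ boundary case, which the paper only handles operationally as a domain wipeout.
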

\begin{proof}{Proof.}
One can enumerate all super sequences $\seq{s}'$ of $\seq{s}$ such that no forbidden pair of $\forbiddenInsertions{\seq{s}, \NBR} $ appears in that order in $\seq{s}'$:
\begin{equation}
    \extDomS{\seq{s}} \cap \extDomF{\NBR} = \{ \seq{s}' \mid (\seq{s} \subseteq \seq{s}') \land \left( \forall v \in (\seq{s}' \setminus \seq{s}) : (\mathrm{prev}(v, \seq{s}, \seq{s}'), v) \notin \forbiddenInsertions{\seq{s}, \NBR} \right) \} \label{eq:domain-intersection-definition}
\end{equation}
where $\mathrm{prev}(v, \seq{s}, \seq{s}')$ denotes the first node preceding $v$ in $\seq{s}'$ and also appearing in $\seq{s}$: 
\begin{align}
    \mathrm{prev}(v, \seq{s}, \seq{s}') &= \begin{cases}
        v & \text{ if } v \in \seq{s} \\
        \mathrm{prev}(v_i, \seq{s}, \seq{s}') \mid v_i \followedIn{\seq{s}'} v & \text{ otherwise }.
    \end{cases} \label{eq:prev-recursive-definition}
\end{align}
$\qed$
\end{proof}

Thanks to Theorem~\ref{theorem:encoding}, $\extDomS{\seq{s}} \cap \extDomF{\NBR}$ can be represented solely with the partial sequence $\seq{s}$ and the pairs of nodes $\forbiddenInsertions{\seq{s}, \NBR}$, computed in $\mathcal{O}(n^2)$ time and also consuming $\mathcal{O}(n^2)$ in memory.

\begin{example} Consider the following set of nodes, partial sequence and NotBetween's:
    \label{ex:forbidden-insertions-domain}
    \begin{itemize}
        \item $V= \{\first,v_1,v_2,v_3,\last\}$
        \item $\seq{s} = \first \cdot v_1 \cdot \last$ 
        \item $\NB = \{ (\first \cdot v_2 \cdot v_1), (v_1 \cdot v_3 \cdot \last) \}$.
    \end{itemize}
    The set of forbidden insertions is then $\forbiddenInsertions{\seq{s}, \NB} = \left\{(\first, v_2), (v_1, v_3)\right\}$.
    The domain is $\extDomS{\seq{s}} \cap \extDomF{\NB} = \{ (\first \cdot v_1 \cdot \last), (\first \cdot v_1 \cdot v_2 \cdot \last), (\first \cdot v_3 \cdot v_1 \cdot \last), (\first \cdot v_3 \cdot v_1 \cdot v_2 \cdot \last) \}$.
    The composition of the sub-domains are represented in Table \ref{tab:sequence-forbidden-insertions-breakdown}. 

\begin{table}[!ht]
  \TABLE{
    Sub-domains composition in Example \ref{ex:forbidden-insertions-domain}.
    A check mark in one of the last 2 columns indicates that the corresponding sequence in the first column is included within the sub-domain.
    \label{tab:sequence-forbidden-insertions-breakdown}
  }{%
    \begin{minipage}[t]{0.48\linewidth}
    \raggedright
      \begin{tabular}[t]{rccc}
        \toprule
        $\mathcal{P}(V)$ & $\extDomS{\seq{s}}$ & $\extDomF{\NBR}$ & $\extDomS{\seq{s}} \cap \extDomF{\NBR}$ \\
        \midrule
        $\first \cdot \last$                                    &                   & \checkmark        &                     \\
        $\bm{\first \cdot v_1 \cdot \last}$                     & $\bm{\checkmark}$ & $\bm{\checkmark}$ & $\bm{\checkmark}$   \\
        $\first \cdot v_2 \cdot \last$                          &                   & \checkmark        &                     \\
        $\first \cdot v_3 \cdot \last$                          &                   & \checkmark        &                     \\
        $\bm{\first \cdot v_1 \cdot v_2 \cdot \last}$           & $\bm{\checkmark}$ & $\bm{\checkmark}$ & $\bm{\checkmark}$   \\
        $\first \cdot v_1 \cdot v_3 \cdot \last$                & \checkmark        &                   &                     \\
        $\first \cdot v_2 \cdot v_1 \cdot \last$                & \checkmark        &                   &                     \\
        $\first \cdot v_2 \cdot v_3 \cdot \last$                &                   & \checkmark        &                     \\
        \bottomrule
      \end{tabular}
    \end{minipage}
    \hfill
    \begin{minipage}[t]{0.51\linewidth} 
    \raggedleft
    \begin{tabular}[t]{rccc}
        \toprule
        $\mathcal{P}(V)$ & $\extDomS{\seq{s}}$ & $\extDomF{\NBR}$ & $\extDomS{\seq{s}} \cap \extDomF{\NBR}$ \\
        \midrule
        $\bm{\first \cdot v_3 \cdot v_1 \cdot \last}$           & $\bm{\checkmark}$ & $\bm{\checkmark}$ & $\bm{\checkmark}$   \\
        $\first \cdot v_3 \cdot v_2 \cdot \last$                &                   & \checkmark        &                     \\
        $\first \cdot v_1 \cdot v_2 \cdot v_3 \cdot \last$      & \checkmark        &                   &                     \\
        $\first \cdot v_1 \cdot v_3 \cdot v_2 \cdot \last$      & \checkmark        &                   &                     \\
        $\first \cdot v_2 \cdot v_1 \cdot v_3 \cdot \last$      & \checkmark        &                   &                     \\
        $\first \cdot v_2 \cdot v_3 \cdot v_1 \cdot \last$      & \checkmark        &                   &                     \\
        $\bm{\first \cdot v_3 \cdot v_1 \cdot v_2 \cdot \last}$ & $\bm{\checkmark}$ & $\bm{\checkmark}$ & $\bm{\checkmark}$   \\
        $\first \cdot v_3 \cdot v_2 \cdot v_1 \cdot \last$      & \checkmark        &                   &                     \\
        \bottomrule
      \end{tabular}
    \end{minipage}
  }{}
\end{table}

\end{example}

\subsubsection*{Domain updates}
Two domains updates are possible on $\extDomS{\seq{s}} \cap \extDomF{\NBR}$: insertion of a node within $\seq{s}$, or adding a new NotBetween into $\NBR$.
Two possible strategies to update $\forbiddenInsertions{\seq{s}, \NBR}$ over those domain modifications are
\begin{enumerate}
    \item Recompute $\forbiddenInsertions{\seq{s}, \NBR}$, which still requires to maintain $\NBR$, taking $\mathcal{O}(n^3)$ in memory.
    \item Store the set $\forbiddenInsertions{\seq{s}, \NBR}$, consuming $\mathcal{O}(n^2)$ in memory, and incrementally update it.
\end{enumerate}
We follow the more efficient second strategy and show how to perform such updates in $\mathcal{O}(n)$ time, keeping an $\mathcal{O}(n^2)$ memory complexity:
\begin{itemize}
    \item If a new NotBetween $(v_1 \cdot v_2 \cdot v_3)$ is added onto $\NBR$, the newly forbidden insertions are:
    \begin{equation}
        \Delta \forbiddenInsertions{\seq{s}, (v_1 \cdot v_2 \cdot v_3)} = \{ (v_i, v_2) \mid v_1 \preceqIn{} v_i \land v_i \precIn{} v_3\} \label{eq:forbidden-insertions-update-forbidden}
    \end{equation}
    Which uses the same rule as \eqref{eq:forbidden-insertions}.
    At most $\mathcal{O}(|\seq{s}|) = \mathcal{O}(n)$ pairs can be added in this manner.
    \item If the sequence $\seq{s}$ is expanded into $\seq{s}'$ through an insertion $\seq{s} \singleDerives{(v_1, v_2)} \seq{s}'$, the newly forbidden insertions are:
    \begin{equation}
        \Delta \forbiddenInsertions{\seq{s}, (v_1, v_2)} = \{ (v_2, v_i) \mid (v_1, v_i) \in \forbiddenInsertions{\seq{s}, \NBR} \} \label{eq:forbidden-insertions-update-insertion}
    \end{equation}
    Denoting that a node $v_i$ that could not be placed directly after $v_1$ now cannot be placed directly after $v_2$ as well.
    The set \eqref{eq:forbidden-insertions-update-insertion} can be created in $\mathcal{O}(n)$ time and contains $\mathcal{O}(n)$ pairs.
\end{itemize}

Whenever a NotBetween $(v_1 \cdot v_2 \cdot v_3)$ is added to $\NBR$ while it is a subsequence of the partial sequence ($(v_1 \cdot v_2 \cdot v_3) \subseteq \seq{s}$), it results in a domain wipeout.
Similarly, expanding $\seq{s}$ into $\seq{s}'$ using a forbidden insertion $(v_1, v_2) \in \forbiddenInsertions{\seq{s}, \NBR}$ (i.e., performing $\seq{s} \singleDerives{(v_1, v_2)} \seq{s}'$) results in a domain wipeout.
Next, we show how to compute the intersection with the remaining subdomains.

\subsection{Excluded Nodes}
\label{sec:encoding-x}
The subdomain $\extDomS{\seq{s}} \cap \extDomF{\NBR}$ presented previously only accounted for a partial sequence $\seq{s}$ and a set of NotBetween's $\NBR$. 
Excluding a node $v$ from the sequence 
can be reduced to adding the NotBetween $(\first \cdot v \cdot \last)$ to $\NBR$ given that all sequences begins at $\first$ and end at $\last$. Therefore for a set of excluded nodes $X$ we have:
\begin{equation}
     \forall v \in X: (\first \cdot v \cdot \last) \in \NBR \label{eq:domain-with-excluded}
\end{equation}
When an update $X \gets X \cup \{ v \}$ occurs,
it suffices to add the NotBetween $(\first, v, \last)$, and use \eqref{eq:forbidden-insertions-update-forbidden} to trigger the incremental update of $\forbiddenInsertions{\seq{s}, \NBR}$.
Thus, there is no need to store the excluded nodes $X$.

\subsection{Required Nodes}
\label{sec:encoding-r}
To take into account a set of required nodes $R$, we do not have other options than storing them. This requires an additional $\mathcal{O}(n)$ memory.
The entire domain $\extDom$ can be represented in $\mathcal{O}(n^2)$ with i) the set of pairs of nodes $ \forbiddenInsertions{\seq{s}, \NBR}$, ii) the partial sequence $\seq{s}$, and iii) the set of required nodes $R$.
Together, those elements implicitly represent all super sequences $\seq{s}'$ of $\seq{s}$ such that no forbidden pair of $\forbiddenInsertions{\seq{s}, \NBR} $ appears in that order in $\seq{s}'$ and all the required nodes are present in $s'$:
\begin{equation}
    \extDom = \{ \seq{s}' \mid (\seq{s} \subseteq \seq{s}') \land \left( \forall v \in (\seq{s}' \setminus \seq{s}) : (\mathrm{prev}(v, \seq{s}, \seq{s}'), v) \notin \forbiddenInsertions{\seq{s}, \NBR} \right) \land \left( \forall v \in R: v \in \seq{s}' \right) \}
\label{eq:domain-intersection-definition-with-required}
\end{equation}

The next section proposes an implementation of those elements using a graph data structure.

\section{Compact domain implementation}

As explained in the previous section, a domain implementation requires representing the forbidden insertions $\forbiddenInsertions{\seq{s}, \NBR}$, the partial sequence $\seq{s}$, and the required nodes $R$.

Rather than representing $\forbiddenInsertions{\seq{s}, \NBR}$, we instead represent the complementary set $(V \times V) \setminus \forbiddenInsertions{\seq{s}, \NBR} \setminus \{(v_i,v_j) \mid v_i,v_j \in \seq{s} \wedge v_i \notFollowedIn{} v_j\}$ as a graph $G(V,E)$. 
Each directed edge $(v_i,v_j) \in E$ with $v_i \in \seq{s}$ represents a possible insertion of node $v_j$ in $\seq{s}$ just after $v_i$ in the partial sequence $\seq{s}$.
Notice that every pair of nodes in the partial sequence that do not directly follow each other are also removed from the set of edges.
As the partial sequence $\seq{s}$ grows and the set of NotBetween's $\NBR$ grows, the set of edges $E$ can only shrink monotonically.
Eventually, when the variable is fixed, the set of edges forms a path that corresponds to the partial sequence $\seq{s}$.
This complementary view enables the enumeration of all feasible insertions of a node into the partial sequence $\seq{s}$. It also allows for counting feasible insertions for each node in the partial sequence. These are two common operations used by filtering algorithms and for implementing first-fail branching heuristics.

\subsection{A Reversible Data Structure}

The main mechanism to restore the state of the domains and constraints on backtrack in CP solver is called trailing (we refer to \cite{minicp} for an introduction to trailing in CP solvers). 
To be integrated in a standard trailed-based CP solver, the compact domain implementation also needs to be reversible.
The graph $G(V,E)$ is implemented as an adjacency list where each list is a reversible set of nodes and the partial sequence $\seq{s}$ uses a reversible successor array.
In more detail, the implementation relies on the following reversible data structures, most of which were already present in \cite{thomas2020insertion, delecluse2022sequence}:
\begin{itemize}
	\item The partial sequence $\lbSeq$ is encoded using a successor array of reversible integers $\directSuccessor{}$.
	It stores a pointer to the current successor of each node belonging to the partial sequence $\lbSeq$, and an element without a successor (i.e., a node $v \in V \setminus \lbSeq$) points towards itself (self-loop).
	Similarly, an additional array of reversible integers $\directPredecessor{}$ tracks the current predecessors of each node.
	\item The set of edges $E$ is maintained by two adjacency sets per node $v \in V$: $\predecessors{v}$ (incoming) and $\successors{v}$ (outgoing) edges: $\predecessors{v} = \{ (v_i, v) \mid v_i \in V, (v_i, v) \in E\}$, $\successors{v} = \{ (v, v_i) \mid v_i \in V, (v, v_i) \in E\}$.
	Those sets reversible sparse-sets introduced in \cite{sparsetdomain}, allowing deletion and state restoration in constant time in a trail-based solver.
	\item A set $I$, tracking the insertable nodes $\{ v \in V \mid v \notin \seq{s} \land v \notin X \}$, as a reversible sparse-set.
	\item The size of the partial sequence $\lbSeq$ is tracked by a reversible integer $nS$. 
	\item Each $v_j\in V$ has a counter $nI_j$ tracking insertions, updated with domain changes, aiding heuristics (e.g., retrieving the node with fewest insertions).
	\item Given that $R$ and $X$ are disjointed and subsets of $V$, we use a sparse-set with two reversible size markers as introduced in \cite{sparsetdomain}, ensuring removal of nodes and state restoration in constant time, and enabling iteration over $R, X$ in $\mathcal{O}(|R|), \mathcal{O}(|X|)$, respectively.
	We also identify with $P = V \setminus X \setminus R$, the \textit{possible} nodes, being not required nor excluded.
\end{itemize}

The data structures used for implementing a compact domain are depicted in Figure \ref{fig:weak-seqvar-representation}.

\begin{figure}[!ht]
    \FIGURE{
        \begin{minipage}{0.47\linewidth}
            \centering
            	\begin{tikzpicture}[node distance=1.0cm and 1.5cm,
	>=LaTeX,
	member/.style={
		minimum size=0.5cm,
		circle, 
		inner sep=1.5pt,
		fill=member,
	},
	possible/.style={
		minimum size=0.5cm,
		circle, 
		inner sep=1.5pt,
		fill=possible,
	},
	required/.style={
		minimum size=0.5cm,
		circle, 
		inner sep=1.5pt,
		fill=required,
	},
	excluded/.style={
		minimum size=0.5cm,
		circle, 
		inner sep=1.5pt,
		fill=removed,
	},
	insertionDirected/.style={
		densely dashed,
		gray!80!white,
		very thick,
		-{Stealth[gray!80!white]}
	},
	insertion/.style={
		densely dashed,
		gray!80!white,
		very thick,
		{Stealth[gray!80!white]}-{Stealth[gray!80!white]}
	},
	successor/.style={
		very thick,
		-{Stealth[black]}
	},
	gridEntry/.style={
		rectangle,
		draw,
		minimum height=5.5mm,
		minimum width=5.5mm,
		outer sep=0pt,
		text height = 1.5mm
	},
	index/.style={
		rectangle,
		minimum height=5.5mm,
		minimum width=5.5mm,
		text height = 0mm
	},
	]
	
	\node [member](first) {$\first$};
	\node [member, right =of first] (v1) {$v_1$};
	\node [member, right =of v1] (last) {$\last$};
	
	\node [excluded, below =of first] (v2) {$v_2$};
	\node [possible, below =of v1] (v3) {$v_3$};
	\node [possible, below =of last] (v4) {$v_4$};
	
	\draw [successor] (first) -- (v1);
	\draw [successor] (v1) -- (last);
	\draw [successor, bend right] (last) edge (first);
	
	\draw [insertionDirected] (first) -- (v3);
	\draw [insertion] (v3) -- (v1);
	\draw [insertion] (v3) -- (v4);
	\draw [insertionDirected] (v1) -- (v4);
	\draw [insertionDirected] (v3) -- (last);
	\draw [insertionDirected] (v4) -- (last);
	
	\node [below = 0.1cm of v3] (I) {
		\begin{tabular}{c|ccccc}
			& $\predecessors{}$ & $\successors{}$ & $nI$ & $\directPredecessor{}$ & $\directSuccessor{}$ \\
			\hline
			\textcolor{member}{$\bm{\first}$} & $\{ \last \}$ & $\{ v_1, v_3 \}$ & 0 & $\last$ & $v_1$ \\[-8pt]
			\textcolor{member}{$\bm{v_1}$} & $\{ \first, v_3\}$ & $ \{ v_3, v_4, \last\}$ & 0 & $\first$ & $\last$\\[-8pt]
			\textcolor{removed}{$\bm{v_2}$} & $ \emptyset $ & $\emptyset $ & 0 & $v_2$ & $v_2$\\[-8pt]
			\textcolor{possible}{$\bm{v_3}$} & $\{ \first, v_1, v_4\}$ & $\{v_1, v_4, \last\}$ & 2 & $v_3$ & $v_3$\\[-8pt]
			\textcolor{possible}{$\bm{v_4}$} & $\{v_1, v_3\}$ & $\{ v_3, \last\}$ & 1 & $v_4$ & $v_4$\\[-8pt]
			\textcolor{member}{$\bm{\last}$} & $\{v_1, v_3, v_4\}$ & $\{ \first\}$ & 0 & $v_1$ & $\first$\\[-8pt]
		\end{tabular}
	};
	
	
	

	
	
	\node[below =0.2cm of I, align=center] (sets-1) {$nS = 3, I = \{ v_3, v_4\}$\\$R = \{ \first, \last, v_1\}, X = \{ v_2\}$};
	
\end{tikzpicture}
        \end{minipage}
        \hfill
        \begin{minipage}{0.47\linewidth}
            \centering
            	\begin{tikzpicture}[node distance=1.0cm and 1.5cm,
		>=LaTeX,
		member/.style={
			minimum size=0.5cm,
			circle, 
			inner sep=1.5pt,
			fill=member,
		},
		possible/.style={
			minimum size=0.5cm,
			circle, 
			inner sep=1.5pt,
			fill=possible,
		},
		required/.style={
			minimum size=0.5cm,
			circle, 
			inner sep=1.5pt,
			fill=required,
		},
		excluded/.style={
			minimum size=0.5cm,
			circle, 
			inner sep=1.5pt,
			fill=removed,
		},
		insertionDirected/.style={
			densely dashed,
			gray!80!white,
			very thick,
			-{Stealth[gray!80!white]}
		},
		insertion/.style={
			densely dashed,
			gray!80!white,
			very thick,
			{Stealth[gray!80!white]}-{Stealth[gray!80!white]}
		},
		successor/.style={
			very thick,
			-{Stealth[black]}
		},
		gridEntry/.style={
			rectangle,
			draw,
			minimum height=5.5mm,
			minimum width=5.5mm,
			outer sep=0pt,
			text height = 1.5mm
		},
		index/.style={
			rectangle,
			minimum height=5.5mm,
			minimum width=5.5mm,
			text height = 0mm
		},
		]
		
		\node [member](first) {$\first$};
		\node [member, right =of first] (v1) {$v_1$};
		\node [member, right =of v1] (last) {$\last$};
		
		\node [excluded, below =of first] (v2) {$v_2$};
		\node [member, below =of v1] (v3) {$v_3$};
		\node [possible, below =of last] (v4) {$v_4$};
		
		\draw [successor] (first) -- (v3);
		\draw [successor] (v3) -- (v1);
		\draw [successor] (v1) -- (last);
		\draw [successor, bend right] (last) edge (first);
		
		\draw [insertionDirected] (v1) -- (v4);
		\draw [insertionDirected] (v4) -- (last);
		
		\node [below = 0.1cm of v3] (I) {
			\begin{tabular}{c|ccccc}
				& $\predecessors{}$ & $\successors{}$ & $nI$ & $\directPredecessor{}$ & $\directSuccessor{}$ \\
				\hline
				\textcolor{member}{$\bm{\first}$} & \makebox[\widthof{$\{v_1, v_3, v_4\}$}][c]{$\{ \last \}$} & \makebox[\widthof{$\{ v_3, v_4, \last\}$}][c]{$\{ v_3 \}$} & 0 & $\last$ & $v_3$ \\[-8pt]
				\textcolor{member}{\bm{$v_1$}} & $\{ v_3\}$ & $ \{ v_4, \last\}$ & 0 & $v_3$ & $\last$\\[-8pt]
				\textcolor{removed}{$\bm{v_2}$} & $ \emptyset $ & $\emptyset $ & 0 & $v_2$ & $v_2$ \\[-8pt]
				\textcolor{member}{$\bm{v_3}$} & $\{ \first \}$ & $\{ v_1 \}$ & 0 & $\first$ & $v_1$\\[-8pt]
				\textcolor{possible}{$\bm{v_4}$} & $\{v_1 \}$ & $\{ \last\}$ & 1& $v_4$ & $v_4$\\[-8pt]
				\textcolor{member}{$\bm{\last}$} & $\{v_1, v_4\}$ & $\{ \first\}$ & 0 & $v_1$ & $\first$\\[-8pt]
			\end{tabular}
		}; 
		
		
		

		
		
		\node[below =0.2cm of I, align=center] (sets-1) {$nS = 4, I = \{ v_4\}$\\$R = \{ \first, \last, v_1, v_3\}, X = \{ v_2\}$};
		
	\end{tikzpicture}
        \end{minipage}
    }{
        Compact domain implementation.
        On the left, the partial sequence $\protect\lbSeq$ and the graphs $G(V,E)$ are shown. 
        Below them is a table showing the edges $\predecessors{}, \successors{}$, the counters of insertions $nI$ and the successors $\directSuccessor{}$ and the predecessors $\directPredecessor{}$ of the nodes (only relevant for nodes $v \in \protect\lbSeq$). 
        The right part shows the domain after performing an insertion with $(\first, v_3)$, extending the partial sequence.
        \label{fig:weak-seqvar-representation}
    }{}
\end{figure}

Invariants on the domain are presented in the Appendix \ref{sec:implementation-invariants} and the implementation of the update operations in Appendix \ref{sec:implementation-updates}.
Some coherence is enforced on the domain representation.
For instance, given that all nodes in the partial sequence will always be visited, we enforce $\seq{s} \subseteq R$.
We also exploit the counter of insertion $nI_j$ of node $v_j \in V$ to automatically insert a node when it is required and has only one insertion remaining (similarly to \cite{kilby2000}), or to exclude $v_j$ from the sequence when no more insertion remains for it.
This process is also detailed in Appendix \ref{sec:implementation-updates}.

\subsection{API}
\label{seq:weak-seqvar-time-complexity}

A sequence domain is fixed whenever, for each node that does not belong to the partial sequence, no insertion remains.
The worst-case time complexity for constructing a sequence is thus $O(|E|)$.
In the worst case, all edges are removed except those that form the sequence.

Table~\ref{tab:weak-seqvar-complexity-query} lists all query operations in a sequence variable domain, along with their associated time complexities. 
Domain updates are described in Table~\ref{tab:weak-seqvar-complexity-update}.

\begin{table}[!ht]
    \TABLE{
        Queries on a compact sequence domain.
        \label{tab:weak-seqvar-complexity-query}
    }
    {\begin{tabular}{llc}
		\toprule
		\textbf{Operation} & \textbf{Description} & \textbf{Complexity}\\
		\midrule
		$\mathrm{isFixed}()$             & Returns true if there are no remaining insertions                      & $\mathcal{O}(1)$\\
		$\mathrm{isMember}(v_i)$         & Returns true if $v_i\in\lbSeq{}$                                              & $\mathcal{O}(1)$\\
		$\mathrm{isRequired}(v_i)$       & Returns true if the node $v_i$ is required                                            & $\mathcal{O}(1)$\\
		$\mathrm{isExcluded}(v_i)$       & Returns true if the node $v_i$ is excluded                                            & $\mathcal{O}(1)$\\
		$\mathrm{isPossible}(v_i)$       & Returns true if the node $v_i$ is possible                                            & $\mathcal{O}(1)$\\
		$\mathrm{isInsertable}(v_i)$     & Returns true if the node $v_i$ is insertable                                          & $\mathcal{O}(1)$\\

		$\mathrm{getNext}(v_i)$          & Returns the successor $\directSuccessor{i}$ of node $v_i$                                            & $\mathcal{O}(1)$\\
		$\mathrm{getPrev}(v_i)$          & Returns the predecessor $\directPredecessor{i}$ of node $v_i$                                               & $\mathcal{O}(1)$\\

		\cmidrule(lr){1-3}

		$\mathrm{nInsert}(v_i)$          & Returns the number of feasible insertions for $v_i$                                       & $\mathcal{O}(1)$\\
		$\mathrm{nMember}()$             & Returns the length of the partial sequence                    & $\mathcal{O}(1)$\\

		$\mathrm{getMember}()$       & Enumerates nodes in the partial sequence $\lbSeq{}$                        & $\Theta(|\lbSeq{}|)$\\
		$\mathrm{getRequired}()$         & Enumerates the required nodes $R$                                              & $\Theta(|R|)$\\
		$\mathrm{getExcluded}()$         & Enumerates the excluded nodes $X$                                              & $\Theta(|X|)$\\
		$\mathrm{getPossible}()$         & Enumerates the possible nodes $P$                                              & $\Theta(|P|)$\\
		$\mathrm{getInsertable}()$       & Enumerates the insertable nodes $I$                                            & $\Theta(|I|)$\\

		\cmidrule(lr){1-3}

		$\mathrm{getEdgesTo}(v_i)$       & Enumerates $\predecessors{i}$                                                  & $\Theta(|\predecessors{i}|)$\\
		$\mathrm{getEdgesFrom}(v_i)$     & Enumerates $\successors{i}$                                                    & $\Theta(|\successors{i}|)$\\
		
		\cmidrule(lr){1-3}

		$\mathrm{canInsert}(v_i,v_j)$    & Returns true if $(v_i, v_j)$ is a feasible insertion & $\mathcal{O}(1)$\\
		$\mathrm{getInsert}(v_j)$        & Enumerates the insertions of $v_j$: $\{\,v_i \mid v_i\in\predecessors{j}\land\mathrm{canInsert}(v_i,v_j)\}$ & 
		$\Theta\!(\min(|\lbSeq{}|,|\predecessors{j}|))$\\
		
		$\mathrm{getInsertAfter}(v_j,p)$ & Enumerates the insertions of $v_j$ after $p$: $\{\,v_i \mid p\precIn{} v_i \land \mathrm{canInsert}(v_i,v_j)\}$ & $\mathcal{O}(|\lbSeq{}|)$\\
        
        \bottomrule
    \end{tabular}}
    {}
\end{table}

\begin{table}[!ht]
    \TABLE{
        Updates on a compact sequence domain.
        \label{tab:weak-seqvar-complexity-update}
    }
    {\begin{tabular}{lcc}
			\toprule
			\textbf{Operation} & \textbf{Update} & \textbf{Complexity}\\
			\midrule
			$\mathrm{notBetween}(v_i,v_j,v_k)$
			& $\NB \gets \NB \cup {(v_i \cdot v_j \cdot v_k)}$
			& $\begin{cases}
				\mathcal{O}(|I|) & \text{if } v_i \followedIn{} v_k\\
				\mathcal{O}(n)                        & \text{otherwise}
			\end{cases}$\\[2pt]
			
			$\mathrm{insert}(v_i,v_j)$
			& $\singleDerives{(v_i,v_j)}$
			& $\Theta(|\predecessors{j}|)$\\
			
			$\mathrm{insertAtEnd}(v_i)$
			& \hspace{0.05cm}$\singleDerives{(\mathrm{getPrev}(\last),v_i)}$
			& $\Theta(|\predecessors{i}|)$\\
			
			$\mathrm{require}(v_i)$
			& $R \gets R \cup \{\,v_i\,\}$
			& $\mathcal{O}(|\predecessors{i}|)$\\
			
			$\mathrm{exclude}(v_i)$
			& $X \gets X \cup \{\,v_i\,\}$
			& $\mathcal{O}(|\predecessors{i}|)$\\
			\bottomrule
    \end{tabular}}
    {}
\end{table}

\subsection{Visit of nodes as boolean variables}
\label{sec:domain-enriching}

Given the set of mandatory nodes $R$, one can easily create a binary variable $\nodeRequired{\seq{S}}{i}$ for any node $v_i \in V$, telling if the node is visited (value 1: $v_i \in R$) or not (value 0: $v_i \not \in R$) by a sequence variable $\seq{S}$ with a compact domain.
In CP, this can be implemented as a \textit{view} over the compact sequence domain, making the usage of such variables cheap once a sequence variable has been created, as in \cite{schulte2013view, van2014domain, minicp}.
Fixing a Boolean variable $\nodeRequired{\seq{S}}{i}$ to 1 may automatically insert the node $v_i$ within the partial sequence if only one insertion point remains for it, as explained in the Appendix \ref{sec:implementation-updates}.

Thanks to the usage of those binary variables, one can easily enforce logical constraints over sequence variables. 
For instance, to force a sequence $\seq{S}$ to visit at least $n$ nodes ($\sum_{v_i \in V} \nodeRequired{\seq{S}}{i} \geq n$), enforce two nodes $v_i, v_j \in V$ to always be visited together ($\nodeRequired{\seq{S}}{i} = \nodeRequired{\seq{S}}{j}$), etc.
More complex constraints, with specific propagators, are presented next.

\begin{table}[!ht]
  \TABLE{
    Correspondence between operations on a boolean variable $\nodeRequired{Sq}{i}$ defined over a node $v_i$ in a sequence variable $\seq{S}$. 
    The left part shows queries over the domains, while the right part shows updates of the domains.
    \label{tab:boolean-variable-operation}
  }{%
    \begin{minipage}[t]{0.48\linewidth}\vspace{0pt}
      \centering
      \begin{tabular}[t]{@{}l M l@{}}
        \toprule
        \textbf{Boolean variable} & & \textbf{Sequence variable} \\
        \midrule
        $|\mathcal{D}(\nodeRequired{\seq{S}}{i})| = 1 $ & $\Longleftrightarrow$ & $\neg \seq{S}.\mathrm{isPossible}(v_i)$ \\
        $\textit{false} \in \mathcal{D}(\nodeRequired{\seq{S}}{i})$ & $\Longleftrightarrow$ & $\neg \seq{S}.\mathrm{isRequired}(v_i)$ \\
        $\textit{true} \in \mathcal{D}(\nodeRequired{\seq{S}}{i})$ & $\Longleftrightarrow$ & $\neg \seq{S}.\mathrm{isExcluded}(v_i)$ \\
        \bottomrule
      \end{tabular}
    \end{minipage}\hfill
    \begin{minipage}[t]{0.48\linewidth}\vspace{0pt}
      \centering
      \begin{tabular}[t]{@{}l M l@{}}
        \toprule
        \textbf{Boolean variable} & & \textbf{Sequence variable} \\
        \midrule
        $\mathcal{D}(\nodeRequired{\seq{S}}{i}) \gets \{\textit{true}\}$ & $\Longleftrightarrow$ & $\seq{S}.\mathrm{require}(v_i)$ \\
        $\mathcal{D}(\nodeRequired{\seq{S}}{i}) \gets \{\textit{false}\}$ & $\Longleftrightarrow$ & $\seq{S}.\mathrm{exclude}(v_i)$ \\
        \bottomrule
      \end{tabular}
    \end{minipage}%
  }{}
\end{table}

\section{Global Constraints}\label{sec:global_constraints}
\label{section:global}

Many useful global constraints and their filtering algorithms can be defined on sequence variables. To limit the scope of this paper, we focus on those required by a wide range of Vehicle Routing Problem (VRP) applications, such as the Dial-a-Ride Problem. Properly characterizing the amount of filtering theoretically is of great interest. Similar to the notions of bound-consistency and domain consistency for global constraints involving integer variables (see \cite{minicp}), various consistency levels can be defined for sequence variables.

\paragraph{Consistency of a constraint on a sequence variable.}

\begin{definition}
    A constraint $\mathcal{C}$ over a 
    sequence domain $\extDom(R, X, \seq{s}, \NB)$ is \emph{insert consistent}, if and only if for every remaining insertion $(v_1,v_2)$ we have $\extDom(R, X, \seq{s}', \NB) \cap \mathcal{C} \neq \emptyset$ where $\seq{s} \singleDerives{(v_1, v_2)} \seq{s}'$.
\end{definition}
\begin{definition}
    Given a sequence domain $\extDom(R, X, \seq{s}, \NB)$, we define its relaxed sequence domain as $\extDom(\{v \mid v \in \seq{s}\}, X, \seq{s}, \NB)$, which considers that only nodes in the partial sequence are required.
\end{definition}
\begin{definition}
    A constraint $\mathcal{C}$ over a 
    sequence domain $\extDom(R, X, \seq{s}, \NB)$ is \emph{relaxed insert consistent}, if and only if for every remaining insertion $(v_1,v_2)$ on $\extDom$ we have $\extDom(\{v \mid v \in \seq{s}'\}, X, \seq{s}', \NB) \cap \mathcal{C} \neq \emptyset$ where $\seq{s} \singleDerives{(v_1, v_2)} \seq{s}'$.
\end{definition}

Given that $\{v \mid v \in \seq{s}\} \subseteq R$, the \emph{relaxed-insert consistency} property for a constraint $\mathcal{C}$ is thus a relaxed form of \emph{insert consistency}.
The filtering algorithms that we introduce are relatively basic and aim to reach \emph{relaxed-insert consistency} rather than \emph{insert consistency}, which may be NP-hard to reach in polynomial time for some constraints.

\subsection{Distance}

The $\mathrm{Distance}$ constraint is used to represent the travel length in a sequence of nodes.
It links an integer variable $\textit{Dist}$ to the traveled distance between nodes in a SV, through transitions defined in a matrix $\bm{d} \in \mathbb{Z}^{|V| \times |V|}$ satisfying the triangular inequality \eqref{eq:distance-definition}.

\begin{equation}
    \mathrm{Distance}(\seq{S},  \bm{d}, \textit{Dist}) \leftrightarrow \sum_{v_i \followedIn{\seq{S}} v_j} \bm{d}_{i, j} = \textit{Dist} \label{eq:distance-definition}
\end{equation}

\subsubsection*{Filtering} 
The filtering is presented in Algorithm \ref{alg:distance}.
First, it computes the traveled distance over the partial sequence $\lbSeq{}$ (line \ref{alg:distance:current-distance}).
If the SV is fixed, this fixes the value of the integer variable $\textit{Dist}$ (line \ref{alg:distance:fixed}). 
Otherwise, this is used to set its lower bound (line \ref{alg:distance:lower-bound}) and compute the length of the largest insertion still possible (line \ref{alg:distance:max-detour}).
All insertions for every insertable node $v_j \in I$ are then looked, and if the cost for inserting a node $v_j$ between two consecutive nodes $v_i$ and $v_k$ is too high, it is removed (lines \ref{alg:distance:detour-cost} to \ref{alg:distance:remove-detour}).
In the worst case, the filtering empties all the edges from $E$, except those in the current sequence $\lbSeq{}$, resulting in a time complexity of $\mathcal{O}(|E|)$.

\begin{algorithm}[!ht]
	\caption{$\mathrm{Distance}(\seq{S}, \bm{d}, \textit{Dist})$ constraint filtering.}
	\label{alg:distance}
	$\textit{length} \gets  \displaystyle \sum_{v_i \followedIn{\lbSeq} v_j} \bm{d}_{i, j}$ \label{alg:distance:current-distance}\;
	\eIf{$\seq{S}.\mathrm{isFixed}()$}{
		$\textit{Dist} \gets \textit{length} \label{alg:distance:fixed}$ \;    
	}{
		$\lfloor \textit{Dist} \rfloor \gets \max(\textit{length}, \lfloor \textit{Dist} \rfloor)$ \label{alg:distance:lower-bound}\;
		$\textit{maxDetour} \gets \lceil \textit{Dist} \rceil - \textit{length} $ \label{alg:distance:max-detour}\;
		\For{$v_j \in \seq{S}.\mathrm{getInsertable}()$}{
			\For{$v_i \in \seq{S}.\mathrm{getInsert}(v_j)$ \label{alg:distance:nested-loop}}{
				$v_k \gets \seq{S}.\mathrm{getNext}(v_i)$ \;
				$\textit{cost} \gets \bm{d}_{i, j} + \bm{d}_{j, k} - \bm{d}_{i, k}$ \label{alg:distance:detour-cost} \;
				\If{$\textit{cost} > \textit{maxDetour}$}{
					$\seq{S}.\mathrm{notBetween}(v_i, v_j, v_k)$ \label{alg:distance:remove-detour}\;
				}
			}
		}
	}
\end{algorithm}


Algorithm \ref{alg:distance} follows a structure used in most filtering algorithms over sequence variables.
The partial sequence $\lbSeq{}$ is first traversed and potentially used to filter other variables.
Then, all insertions are inspected and possibly removed if they cannot be performed.

Regarding constraints in the next sections, the reader is referred to \cite{delecluse2022sequence, thomas2020insertion} and Appendix \ref{sec:filtering} for the presentation of filtering algorithms.

\subsection{TransitionTimes}

The $\mathrm{TransitionTimes}$ constraint is used for problems where node visits involve a service duration and are restricted by time windows, with a transition time required to move from one node to the next.
More formally, each node $v_i \in V $ is attached to an integer variable $\mathbf{Start}_i$ representing the start of the service at that node and a service duration value $\bm{s}_i$. 
A matrix $\bm{d} \in \mathbb{Z}^{|V| \times |V|}$ defines the transition times between elements and satisfies the triangular inequality. The definition of the constraint is:
\begin{equation}
    \mathrm{TransitionTimes}(\seq{S}, \mathbf{Start}, \bm{s}, \bm{d}) \leftrightarrow  
    \forall v_i \precIn{\seq{S}} v_j : 
    \mathbf{Start}_{i} + \bm{s}_{i} + \bm{d}_{i, j} \leq \mathbf{Start}_{j}   \label{eq:tt_definition}
\end{equation}

We consider that waiting at a given node (i.e., reaching it before its time window without beginning the task related to it) is possible, which is why \eqref{eq:tt_definition} uses inequalities.
Moreover, the start variable $\mathbf{Start}_v$ of an unvisited node $v \notin \seq{S}$ is not constrained.

\subsection{Precedence}

For some applications, visiting a set of nodes in a specific order is important, such as the visit of a pickup that must be done before visiting the corresponding drop-off.
The $\mathrm{Precedence}$ constraint can be used in such scenarios, ensuring that an ordered set of nodes $\seq{o}$ appears in the same order in a sequence variable ($\seq{o}$ being a fixed sequence, not a variable). 
It is formally defined as
\begin{equation}
\label{eq:precedence_definition}
    \mathrm{Precedence}(\seq{S}, \seq{o}) \leftrightarrow \forall v_i \precIn{\seq{o}} v_j : v_i, v_j \in \seq{S} \implies v_i \precIn{\seq{S}} v_j
\end{equation}

Note that some or all nodes in $\seq{o}$ may be absent from the SV. 
If the nodes from the set $\seq{O}$ must be all present or all absent, one can easily enforce this with $\forall v_i, v_j \in \seq{o} : \nodeRequired{\seq{S}}{i} = \nodeRequired{\seq{S}}{j}$.

\subsection{Cumulative}

Some variations of VRP involve pickup and deliveries, transporting goods or people.
The $\mathrm{Cumulative}$ constraint can be used to represent those scenarios. 
It ensures that going through all pickups and deliveries visited in a sequence respects an assigned capacity.

More specifically, let us define an \textit{activity} $i$ as a pair of nodes ($\bm{s}_i, \bm{e}_i$) corresponding to its start (pickup) and end (delivery), respectively. The set of all activities is written $A$. 
An activity $i \in A$ consumes a certain load $\bm{l}_i$ during its execution and can be in one of three states with respect to the partial sequence $\lbSeq{}$: \textit{fully inserted} if $\bm{s}_i \in \lbSeq{} \land \bm{e}_i \in \lbSeq{}$, \textit{non-inserted} if $\bm{s}_i \notin \lbSeq{} \land \bm{e}_i \notin \lbSeq{}$, and \textit{partially inserted} otherwise (the start or the end is inserted but not both). 
The $\mathrm{Cumulative}$ constraint with a maximum capacity $c$, with starts $\bm{s}$, corresponding ends $\bm{e}$ and loads $\bm{l}$ is defined as:
\begin{equation}
	\mathrm{Cumulative}(\seq{S}, \bm{s}, \bm{e}, \bm{l}, c) \leftrightarrow 
		\begin{cases} 
			\left( \forall v \in \seq{S} : \sum_{i \in A \mid \bm{s}_i \preceq v \prec \bm{e}_i} \bm{l}_i \le c \right) \land \\
			\left( \forall i \in A : \bm{s}_i \in \seq{S} \iff \bm{e}_i \in \seq{S} \right)  \land  \\
			\left( \forall i \in A : \mathrm{Precedence}(\seq{S}, (\bm{s}_i, \bm{e}_i)) \right)
		\end{cases}
    \label{eq:cumul_definition} 
\end{equation}

This constraint implies that the start $\bm{s}_i$ of an activity $i \in A$ is visited before its end $\bm{s}_i$ ($\forall i \in A : \mathrm{Precedence}(\seq{S}, (\bm{s}_i, \bm{e}_i))$), 
and that its nodes are either all present or all absent ($\forall i \in A : \nodeRequired{\seq{S}}{\bm{s}_i} = \nodeRequired{\seq{S}}{\bm{e}_i}$).
An example of sequence over which the constraint holds is presented in Figure \ref{fig:cumulative-example-fixed}.
\begin{figure}[!ht]
    \FIGURE{
        	\definecolor{colorActivity0}{HTML}{D55E00}
	\definecolor{colorActivity1}{HTML}{56B4E9}
	\definecolor{colorActivity2}{HTML}{009E73}
	\definecolor{colorActivity3}{HTML}{E69F00}
	
	\begin{tikzpicture}[node distance=0.8cm and 0.8cm,
		>=LaTeX,
		member/.style={
			circle, 
			inner sep=1.5pt,
			minimum size=6mm,
		},
		possible/.style={
			circle, 
			inner sep=1.5pt,
			minimum size=6mm,
		},
		excluded/.style={
			circle, 
			inner sep=1.5pt,
			minimum size=6mm,
		},
		insertionDirected/.style={
			densely dashed,
			gray!80!white,
			very thick,
			-{Stealth[gray!80!white]}
		},
		insertion/.style={
			densely dashed,
			gray!80!white,
			very thick,
			{Stealth[gray!80!white]}-{Stealth[gray!80!white]}
		},
		successor/.style={
			very thick,
			-{Stealth[black]}
		},
		xtick/.style={
			rectangle,
			minimum height=0.6cm,
		},
		]
		\newcommand{\distanceBelowGraph}{2cm}
		
		
		\node [member](start) {$\first$};
		\node [member, right= of start, fill=colorActivity0](s0) {$s_0$};
		\node [member, right= of s0, fill=colorActivity1](s1) {$s_1$};
		\node [member, right= of s1, fill=colorActivity1](e1) {$e_1$};
		\node [member, right= of e1, fill=colorActivity0](e0) {$e_0$};
		\node [member, right= of e0, fill=colorActivity3](s3) {$s_3$};
		\node [member, right= of s3, fill=colorActivity3](e3) {$e_3$};
		\node [member, right= of e3](end) {$\last$};
		
		\node [possible, right= 1.2cm of end, fill=colorActivity2] (s2) {$s_2$};
		\node [possible, right= of s2, fill=colorActivity2] (e2) {$e_2$};
		
		\draw [successor] (start) edge (s0);
		\draw [successor] (s0) edge (s1);
		\draw [successor] (s1) edge (e1);
		\draw [successor] (e1) edge (e0);
		\draw [successor] (e0) edge (s3);
		\draw [successor] (s3) edge (e3);
		\draw [successor] (e3) edge (end);
		
		
		\node[below = \distanceBelowGraph of start, xtick, anchor=north] (Start) {${\first}$};
		\node[below = \distanceBelowGraph of s0, xtick, anchor=north] (LoadS0) {${s_0}$};
		\node[below = \distanceBelowGraph of s1, xtick, anchor=north] (LoadS1) {${s_1}$};
		\node[below = \distanceBelowGraph of e1, xtick, anchor=north] (LoadE1) {${e_1}$};
		\node[below = \distanceBelowGraph of e0, xtick, anchor=north] (LoadE0) {${e_0}$};
		\node[below = \distanceBelowGraph of s3, xtick, anchor=north] (LoadS3) {${s_3}$};
		\node[below = \distanceBelowGraph of e3, xtick, anchor=north] (LoadE3) {${e_3}$};
		\node[below = \distanceBelowGraph of end, xtick, anchor=north] (End) {${\last}$};
		
		\coordinate[above left = 1cm and -0.1cm of LoadS0] (A0-begin);
		\coordinate[above left = 0cm and 0cm of LoadE0] (A0-end);
		\fill [fill=colorActivity0, fill opacity=1] (A0-begin) rectangle (A0-end);
		\coordinate (A0-center) at ($ (A0-begin)!0.5!(A0-end) $);
		\node at (A0-center) {$0$};
		%
		\coordinate[above left = 1.5cm and -0.1cm of LoadS1] (A1-begin);
		\coordinate[above left = 1cm and 0cm of LoadE1] (A1-end);
		\fill [fill=colorActivity1, fill opacity=1] (A1-begin) rectangle (A1-end);
		\coordinate (A1-center) at ($ (A1-begin)!0.5!(A1-end) $);
		\node at (A1-center) {$1$};
		%
		\coordinate[above left = 1cm and -0.1cm of LoadS3] (A3-begin);
		\coordinate[above left= 0cm and 0cm of LoadE3] (A3-end);
		\fill [fill=colorActivity3, fill opacity=1] (A3-begin) rectangle (A3-end);
		\coordinate (A3-center) at ($ (A3-begin)!0.5!(A3-end) $);
		\node at (A3-center) {$3$};
		%
		\coordinate[above left = 0cm and 0cm of Start] (origin);
		\coordinate[above right = 0cm and 0.5cm of End] (endingSequence);
		\draw[->, thick] (origin) -- (endingSequence);
		\node[right=0cm of endingSequence] (label-sequence) {$\seq{S}$};
		%
    	\coordinate[above = 2.0cm of origin] (endingCapacity);
    	\draw[->, thick] (origin) -- (endingCapacity);
    	\coordinate[above= 1.5cm of origin] (coord3);
    	\coordinate[above= 1cm of origin] (coord2);
    	\coordinate[above= 0.5cm of origin] (coord1);
    	\node[left=0.1cm of coord3] (label3) {$3$};
    	\node[left=0.1cm of coord2] (label2) {$2$};
    	\node[left=0.1cm of coord1] (label1) {$1$};
    	\draw (coord3) -- (label3);
    	\draw (coord2) -- (label2);
    	\draw (coord1) -- (label1);
    	\node[rectangle, minimum size=1cm, left =0.5cm of label2, rotate=90, xshift=0.5cm, yshift=-0.2cm] (label-load) {Load};
		
	\end{tikzpicture}
    }{
        $\mathrm{Cumulative}(\seq{S}, (s_0, s_1, s_2, s_3), (e_0, e_1, e_2, e_3), (2, 1, 1, 2), 3)$ over a fixed sequence variable $\seq{S} = \first \cdot s_0 \cdot s_1 \cdot e_1 \cdot e_0 \cdot s_3 \cdot e_3 \cdot \last$.
        The sequence is shown at the top, and the graph below shows the accumulated load for each node $v \in \seq{S}$. 
        Note how nodes $s_2, e_2$ related to activity 2 are not part of the sequence.
        \label{fig:cumulative-example-fixed}
    }{}
\end{figure}

\section{Search on a sequence variable}
\label{section:search}

This section presents the basic search procedure used in conjunction with one or more sequence variables to explore the search space.

\subsection{Branching}

Filtering of the constraints is generally not enough to terminate with fixed sequences.
A search procedure is needed to explore the search space.
When working with sequence variables, this corresponds to iteratively choosing an unfixed sequence and applying alternative decisions further constraining its domain, through the use of $\emph{insert}$ and $\emph{notBetween}$ operations.
Once all sequences in the problem are fixed and the constraints are satisfied, a solution to the problem has been found.

As shown in the left part of Figure~\ref{fig:sequence-construction-split}, different intermediate sequences can ultimately lead to the same one through different insertion steps. 
This symmetry, induced by branching decisions, can cause inefficiencies. 
Ideally, we would explore search trees corresponding to disjoint search spaces.

\begin{figure}[!ht]
    \FIGURE{
        	\begin{tikzpicture}[scale=1.2, node distance=0.8cm and 0.2cm,
		every node/.style={scale=0.9},
		state/.style={
			rectangle, 
			inner sep=1.5pt,
			minimum size=6mm,
		},]
	
	
		\node[state] (v1v2v3A) {$v_1 v_2 v_3$};
		\node[state, right =of v1v2v3A] (v1v3v2A) {$v_1 v_3 v_2$};
		\node[state, right =of v1v3v2A] (v3v1v2A) {$v_3 v_1 v_2$};
		\node[state, right =of v3v1v2A] (v2v1v3A) {$v_2 v_1 v_3$};
		\node[state, right =of v2v1v3A] (v2v3v1A) {$v_2 v_3 v_1$};
		\node[state, right =of v2v3v1A] (v3v2v1A) {$v_3 v_2 v_1$};
		
		\node[state, above =of v1v3v2A] (v1v2A) {$v_1 v_2$};
		\node[state, above =of v3v1v2A] (v1v3A) {$v_1 v_3$};
		\node[state, above =of v2v1v3A] (v3v1A) {$v_3 v_1$};
		\node[state, above =of v2v3v1A] (v2v1A) {$v_2 v_1$};
		\coordinate (middleLayerCenterA) at ($ (v1v3A)!0.5!(v3v1A) $);
		
		\node[state, above=of middleLayerCenterA] (v1A) {$v_1$};
		
		\draw[->] (v1A) edge (v1v2A);
		\draw[->] (v1A) edge (v2v1A);
		\draw[->] (v1A) edge (v3v1A);
		\draw[->] (v1A) edge (v1v3A);
		
		\draw[->] (v1v2A) edge (v1v2v3A);
		\draw[->] (v1v2A) edge (v1v3v2A);
		\draw[->] (v1v2A) edge (v3v1v2A);
		
		\draw[->] (v1v3A) edge (v2v1v3A);
		\draw[->] (v1v3A) edge (v1v2v3A);
		\draw[->] (v1v3A) edge (v1v3v2A);
		
		\draw[->] (v2v1A) edge (v3v2v1A);
		\draw[->] (v2v1A) edge (v2v3v1A);
		\draw[->] (v2v1A) edge (v2v1v3A);
		
		\draw[->] (v3v1A) edge (v2v3v1A);
		\draw[->] (v3v1A) edge (v3v2v1A);
		\draw[->] (v3v1A) edge (v3v1v2A);
		

		\node[state, right= 7.5cm of v1v2v3A] (v1v2v3B) {$v_1 v_2 v_3$};
		\node[state, right =of v1v2v3B] (v1v3v2B) {$v_1 v_3 v_2$};
		\node[state, right =of v1v3v2B] (v3v1v2B) {$v_3 v_1 v_2$};
		\node[state, right =of v3v1v2B] (v2v1v3B) {$v_2 v_1 v_3$};
		\node[state, right =of v2v1v3B] (v2v3v1B) {$v_2 v_3 v_1$};
		\node[state, right =of v2v3v1B] (v3v2v1B) {$v_3 v_2 v_1$};
		
		\node[state, above =of v1v3v2B] (v1v2B) {$v_1 v_2$};
		\node[state, above =of v2v3v1B] (v2v1B) {$v_2 v_1$};
		\coordinate (middleLayerCenterB) at ($ (v1v2B)!0.5!(v2v1B) $);
		
		\node[state, above=of middleLayerCenterB] (v1B) {$v_1$};
		
		\draw[->] (v1B) edge (v1v2B);
		\draw[->] (v1B) edge (v2v1B);
		
		\draw[->] (v1v2B) edge (v1v2v3B);
		\draw[->] (v1v2B) edge (v1v3v2B);
		\draw[->] (v1v2B) edge (v3v1v2B);
		
		\draw[->] (v2v1B) edge (v2v1v3B);
		\draw[->] (v2v1B) edge (v2v3v1B);
		\draw[->] (v2v1B) edge (v3v2v1B);

	\end{tikzpicture}
	
    }{
        Sequences created from a fully connected graph $G(V, E)$ with $V = \{\first, v_1, v_2, v_3, \last\}$, where all nodes are required ($V = R$). Nodes $\first$ and $\last$ are implicit and not shown.
        Left: each sequence is extended by inserting any node at each step.
        Right: a node is first selected for insertion, then all its feasible positions are considered for insertion before moving to the next node (first $v_1$, then $v_2$, then $v_3$).
        \label{fig:sequence-construction-split}
    }{}
\end{figure}

\subsubsection*{Disjoint search spaces}

A simple branching strategy that guarantees the generation of disjoint search spaces is the two-step $n$-ary branching:

\begin{enumerate}
    \item \textbf{Node selection}: Choose an insertable node $v_i \in I$ within a sequence variable $\seq{S}$.
   
    \item \textbf{Node branching}: For each insertion of $v_i$ in $\seq{S}$, create a corresponding branching decision.

\end{enumerate}

The first step is similar to the variable selection used with integer variables.
It allows the integration of first-fail strategies, such as selecting nodes with the fewest possible insertion points.  
The second step is conceptually similar to value selection; therefore, the most promising insertions should be attempted first.
It allows the integration of insertion-based heuristics, such as selecting the insertion that results in the smallest increase in tour length.

As can be observed in the right part of Figure~\ref{fig:sequence-construction-split}, this strategy generates only distinct sequences.

Another simple binary branching strategy that offers the same guarantees is to replace the second step with only two branches.
An insertion point is selected, with the insertion performed on the left branch, while the corresponding $\mathrm{notBetween}$ constraint is enforced on the right branch.

\subsection{Large Neighborhood Search}

The usage of LNS with sequence variables was already presented in Algorithm \ref{alg:lns_darp}.
In the context of VRP, this algorithm consists of relaxing some tours by removing nodes from sequences. 
This approach maintains partial tours, similar in spirit to the partial-order scheduling method introduced for scheduling in \cite{godard2005randomized}.
It also accurately corresponds to the original LNS presented in \cite{shaw1998using}, where partial tours are extended through insertions.

The reconstruction phase uses CP and its search capabilities to reinsert the removed nodes into the restricted problem, possibly within a time limit, before restarting the process.

The set of nodes to relax can be selected in various ways, such as randomly or with more advanced strategies based on relatedness criteria, such as geographical proximity or time-based considerations between nodes, as in \cite{bent2004two, christiaens2020slack}.

\section{Experimental results on the Dial-A-Ride Problem}
\label{section:xp}

We previously presented sequence variables, several global constraints and considerations on the search procedure.
We will now use that information to solve the DARP model from \eqref{eq:darp-objective}-\eqref{eq:darp-max-route-duration}.

The state-of-the-art, to the best of our knowledge, is the Adaptive LNS proposed by \cite{gschwind2019adaptive}, combining the operators proposed by \cite{ropke2006adaptive} (random, worst and Shaw removal with greedy and regret based insertions) with additional relaxations, exploiting 9 removal and 5 insertion operators in total.
Insertions are evaluated based on a feasibility check specific to the DARP, and obtained solutions close to the best found so far are further optimized using an adaptation of the neighborhood proposed in \cite{balas2001linear}.
For further readings on the DARP, the reader is referred to the literature review from \cite{ho2018survey}.


\subsection{Search}

We use a simple LNS that always relaxes 10 requests chosen randomly and uses Algorithm \ref{alg:branching_darp} for branching.
The request to insert is the one that has the minimum number of insertions, defined as the sum of insertions for its pickup and drop.
Insertions with a small increase in distance and high-preserved time slack are considered first, as in \cite{lnsffpa}.
When considering an insertion of node $v_j$ between nodes $v_i, v_k$ in a vehicle, the heuristic cost is defined as:
\begin{equation}
    C_1 (\bm{d}_{i, j} + \bm{d}_{j, k} - \bm{d}_{i, k}) - C_2 (\lceil \textbf{\textit{Time}}_k \rceil - \lfloor \textbf{\textit{Time}}_i \rfloor - \bm{s}_i - \bm{d}_{i, j} - \bm{s}_j - \bm{d}_{j, k})
    \label{eq:darp-insertion-cost}
\end{equation}

Where constants $C_1, C_2$ control the importance of the detour cost and the preserved time slack, respectively.
The heuristic cost for inserting a request $\bm{r}_i \in R$ is the sum of cost for inserting its pickup $\pick{i}$ and drop $\drop{i}$.

\subsection{Computational results}

We compare the sequence variable approach with other approaches suitable for modeling VRP:

\begin{itemize}
    \item \textbf{A successor model} written in Minizinc \citep{minizinc} and run with the Gecode solver \citep{schulte2006gecode}, which performed best on the DARP across all Minizinc backend.
    The underlying CP model is essentially the same as in \cite{berbeglia2011checking}. 
    Both exhaustive search (\textbf{Succ}) and LNS (\textbf{Succ-LNS}) are reported.
    \item \textbf{OR-Tools} and its routing library, which relies on local search \citep{ortools}.
    All combinations of first solution strategy and local search meta heuristics in the solver have been tried, and we report the best overall configuration.
    \item \textbf{Hexaly} (version 13.0), a commercial solver specialized in routing \citep{hexaly}, using a model provided by the Hexaly team.
    \item \textbf{CP Optimizer (CPO)} (version 22.1.1.0), a commercial scheduling solver, whose model is written using head-tail sequence variables.
\end{itemize}

Details on the models along with the full DARP definition are in Appendix \ref{sec:darp}.
This list voluntary omits the state-of-the-art methods on the DARP, described earlier, as they are specialized towards the DARP only, and cannot be directly adapted to cope with other kinds of VRP.
Nevertheless, we make use of the best known solutions found by such methods in our comparisons.

All experiments were conducted using two Intel® Xeon® CPU E5-2687W in single-threaded mode, with 15 minutes of run time.
The sequence variable was implemented in Java using MaxiCP \cite{MaxiCP2024}, an extended and open-source version of the MiniCP solver from \cite{minicp}.

The results for the instances from~\cite{cordeau2003} are shown in Figure~\ref{fig:instances-over-gap}.
It shows the proportion of instances (on the y-axis) that achieved a solution with a gap less than or equal to each value $\tau$ (on the x-axis) for each solver $s$: $\gamma_s(\tau) = | \{\, p \in \mathcal{P} : \gamma_{p,s} \leq \tau \,\} | / |\mathcal{P}|$ where $\mathcal{P}$ denotes the set of problem instances, and $\gamma_{p,s}$ denotes the primal gap to the best known solution of the best solution obtained by solver $s$ within the computation time limit. This gap is set to 1 if no incumbent solution is found within the time limit.

The approach using a sequence variable, although based on a simple LNS, consistently achieves solutions within 15\% of the best-known results for the DARP. Detailed results per instance are provided in Appendix \ref{sec:darp:detailed-results}.

We also report performance when (i) the ride time constraints and maximum route duration of the DARP are removed—reducing it to a Pickup and Delivery Problem with Time Windows (PDPTW)—and (ii) when time windows are additionally removed—reducing it further to a Pickup and Delivery Problem (PDP). These variants illustrate the adaptability of the proposed models. For these cases as well, the best results are obtained using the sequence variable approach.




\begin{figure}[!ht]
    \FIGURE{
        \includegraphics[width=\linewidth]{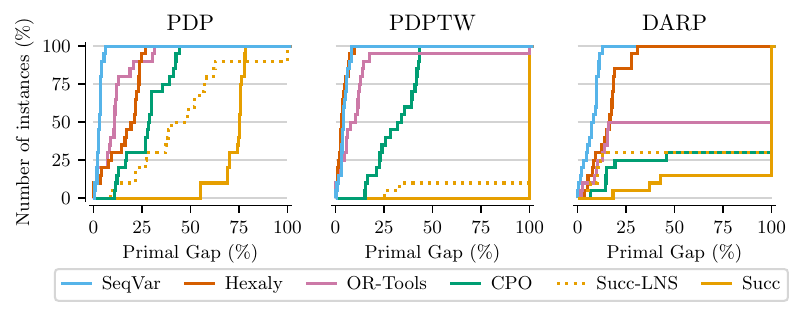}
    }{
        Number of instances solved for each primal gap value. Curves on the top left are the best.
        \label{fig:instances-over-gap}
    }{}
\end{figure}

\section{Conclusion}

This paper enhances previous proposals of Sequence Variables, which are used in CP to tackle vehicle routing and sequencing problems.
Their domain representation, implementation, and interactions with Boolean variables are proposed and formalized.
They ease the modeling of complex VRP such as the DARP while staying close to the state-of-the-art in terms of performances.
These variables are compatible with optional visits, insertion-based heuristics, and can be easily combined with Large Neighborhood Search, making them practical for solving complex VRP in a CP framework.

\ACKNOWLEDGMENT{%
The authors would like to thank the Hexaly team for providing their model of the DARP, as well as a license key for running their solver.
}

\bibliographystyle{informs2014} 
\bibliography{bibliography} 

\begin{thebibliography}{41}
\providecommand{\natexlab}[1]{#1}
\providecommand{\url}[1]{\texttt{#1}}
\providecommand{\urlprefix}{URL }

\bibitem[{Balas \protect\BIBand{} Simonetti(2001)}]{balas2001linear}
Balas E, Simonetti N (2001) Linear time dynamic-programming algorithms for new
  classes of restricted tsps: A computational study. \emph{INFORMS journal on
  Computing} 13(1):56--75.

\bibitem[{Benchimol et~al.(2012)Benchimol, Hoeve, R{\'e}gin, Rousseau,
  \protect\BIBand{} Rueher}]{benchimol2012improved}
Benchimol P, Hoeve WJv, R{\'e}gin JC, Rousseau LM, Rueher M (2012) Improved
  filtering for weighted circuit constraints. \emph{Constraints} 17:205--233.

\bibitem[{Benoist et~al.(2024)Benoist, Gardi, Julien, \protect\BIBand{}
  Megel}]{hexaly}
Benoist T, Gardi F, Julien D, Megel R (2024) Hexaly.
  \urlprefix\url{https://www.hexaly.com}.

\bibitem[{Bent \protect\BIBand{} Van~Hentenryck(2004)}]{bent2004two}
Bent R, Van~Hentenryck P (2004) A two-stage hybrid local search for the vehicle
  routing problem with time windows. \emph{Transportation Science}
  38(4):515--530.

\bibitem[{Berbeglia et~al.(2011)Berbeglia, Pesant, \protect\BIBand{}
  Rousseau}]{berbeglia2011checking}
Berbeglia G, Pesant G, Rousseau LM (2011) Checking the feasibility of
  dial-a-ride instances using constraint programming. \emph{Transportation
  Science} 45(3):399--412.

\bibitem[{Boussemart et~al.(2004)Boussemart, Hemery, Lecoutre,
  \protect\BIBand{} Sais}]{boussemart2004boosting}
Boussemart F, Hemery F, Lecoutre C, Sais L (2004) Boosting systematic search by
  weighting constraints. \emph{ECAI}, volume~16, 96--97.

\bibitem[{Boussemart et~al.(2016)Boussemart, Lecoutre, Audemard,
  \protect\BIBand{} Piette}]{xcsp3}
Boussemart F, Lecoutre C, Audemard G, Piette C (2016) Xcsp3: an integrated
  format for benchmarking combinatorial constrained problems. \emph{arXiv
  preprint arXiv:1611.03398} .

\bibitem[{Braekers et~al.(2016)Braekers, Ramaekers, \protect\BIBand{}
  Van~Nieuwenhuyse}]{braekers2016vehicle}
Braekers K, Ramaekers K, Van~Nieuwenhuyse I (2016) The vehicle routing problem:
  State of the art classification and review. \emph{Computers \& industrial
  engineering} 99:300--313.

\bibitem[{Cappart et~al.(2018)Cappart, Thomas, Schaus, \protect\BIBand{}
  Rousseau}]{10.1007/978-3-319-98334-9_32}
Cappart Q, Thomas C, Schaus P, Rousseau LM (2018) A constraint programming
  approach for solving patient transportation problems. Hooker J, ed.,
  \emph{International Conference on Principles and Practice of Constraint
  Programming (CP)}, 490--506 (Springer).

\bibitem[{Christiaens \protect\BIBand{}
  Vanden~Berghe(2020)}]{christiaens2020slack}
Christiaens J, Vanden~Berghe G (2020) Slack induction by string removals for
  vehicle routing problems. \emph{Transportation Science} 54(2):417--433.

\bibitem[{Cordeau \protect\BIBand{} Laporte(2003)}]{cordeau2003}
Cordeau JF, Laporte G (2003) A tabu search heuristic for the static
  multi-vehicle dial-a-ride problem. \emph{Transportation Research Part B:
  Methodological} 37:579--594,
  \urlprefix\url{http://dx.doi.org/10.1016/S0191-2615(02)00045-0}.

\bibitem[{de~Saint-Marcq et~al.(2013)de~Saint-Marcq, Schaus, Solnon,
  \protect\BIBand{} Lecoutre}]{sparsetdomain}
de~Saint-Marcq VlC, Schaus P, Solnon C, Lecoutre C (2013) Sparse-sets for
  domain implementation. \emph{CP workshop on Techniques foR Implementing
  Constraint programming Systems (TRICS)}, 1--10.

\bibitem[{Delecluse et~al.(2022)Delecluse, Schaus, \protect\BIBand{}
  Van~Hentenryck}]{delecluse2022sequence}
Delecluse A, Schaus P, Van~Hentenryck P (2022) Sequence variables for routing
  problems. \emph{International Conference on Principles and Practice of
  Constraint Programming (CP)}.

\bibitem[{Dooms et~al.(2005)Dooms, Deville, \protect\BIBand{}
  Dupont}]{dooms2005cp}
Dooms G, Deville Y, Dupont P (2005) Cp(graph): Introducing a graph computation
  domain in constraint programming. \emph{International Conference on
  Principles and Practice of Constraint Programming (CP)}, 211--225 (Springer).

\bibitem[{Gervet(1997)}]{gervet1997interval}
Gervet C (1997) Interval propagation to reason about sets: Definition and
  implementation of a practical language. \emph{Constraints} 1:191--244.

\bibitem[{Godard et~al.(2005)Godard, Laborie, \protect\BIBand{}
  Nuijten}]{godard2005randomized}
Godard D, Laborie P, Nuijten W (2005) Randomized large neighborhood search for
  cumulative scheduling. \emph{International Conference on Automated Planning
  and Scheduling (ICAPS)}, volume~5, 81--89 (AAAI-Press).

\bibitem[{Gschwind \protect\BIBand{} Drexl(2019)}]{gschwind2019adaptive}
Gschwind T, Drexl M (2019) Adaptive large neighborhood search with a
  constant-time feasibility test for the dial-a-ride problem.
  \emph{Transportation Science} 53(2):480--491.

\bibitem[{Hartert et~al.(2015)Hartert, Schaus, Vissicchio, \protect\BIBand{}
  Bonaventure}]{hartert2015solving}
Hartert R, Schaus P, Vissicchio S, Bonaventure O (2015) Solving segment routing
  problems with hybrid constraint programming techniques. \emph{International
  Conference on Principles and Practice of Constraint Programming (CP)},
  592--608 (Springer).

\bibitem[{Hentenryck(2002)}]{opl}
Hentenryck PV (2002) Constraint and integer programming in opl. \emph{INFORMS
  Journal on Computing} 14(4):345--372.

\bibitem[{Ho et~al.(2018)Ho, Szeto, Kuo, Leung, Petering, \protect\BIBand{}
  Tou}]{ho2018survey}
Ho SC, Szeto WY, Kuo YH, Leung JM, Petering M, Tou TW (2018) A survey of
  dial-a-ride problems: Literature review and recent developments.
  \emph{Transportation Research Part B: Methodological} 111:395--421.

\bibitem[{Jain \protect\BIBand{} Van~Hentenryck(2011)}]{lnsffpa}
Jain S, Van~Hentenryck P (2011) Large neighborhood search for dial-a-ride
  problems. \emph{International Conference on Principles and Practice of
  Constraint Programming (CP)}, 400--413 (Springer).

\bibitem[{Kilby et~al.(2000)Kilby, Prosser, \protect\BIBand{} Shaw}]{kilby2000}
Kilby P, Prosser P, Shaw P (2000) A comparison of traditional and
  constraint-based heuristic methods on vehicle routing problems with side
  constraints. \emph{Constraints} 5(4):389--414.

\bibitem[{Kilby \protect\BIBand{} Shaw(2006)}]{kilby2006vehicle}
Kilby P, Shaw P (2006) Vehicle routing. \emph{Handbook of Constraint
  Programming}, 801--836 (Elsevier).

\bibitem[{Laborie et~al.(2009)Laborie, Rogerie, Shaw, \protect\BIBand{}
  Vil{\'i}m}]{Reasoning_with_Conditional_Time_Intervals_2}
Laborie P, Rogerie J, Shaw P, Vil{\'i}m P (2009) Reasoning with conditional
  time-intervals. part ii: An algebraical model for resources. \emph{FLAIRS
  Conference}.

\bibitem[{Laborie et~al.(2018)Laborie, Rogerie, Shaw, \protect\BIBand{}
  Vil\'{\i}m}]{ibm_ilog_cp}
Laborie P, Rogerie J, Shaw P, Vil\'{\i}m P (2018) Ibm ilog cp optimizer for
  scheduling. \emph{Constraints} 23(2):210–250, ISSN 1383-7133,
  \urlprefix\url{http://dx.doi.org/10.1007/s10601-018-9281-x}.

\bibitem[{Lauriere(1978)}]{Lauriere1978}
Lauriere J (1978) A language and a program for stating and solving
  combinatorial problems. \emph{Artificial Intelligence} 10(1):29--127,
  \urlprefix\url{http://dx.doi.org/10.1016/0004-3702(78)90029-2}.

\bibitem[{Liu et~al.(2018)Liu, Aleman, \protect\BIBand{}
  Beck}]{liu2018modelling}
Liu C, Aleman DM, Beck JC (2018) Modelling and solving the senior
  transportation problem. \emph{International Conference on the Integration of
  Constraint Programming, Artificial Intelligence, and Operations Research
  (CPAIOR)}, 412--428 (Springer).

\bibitem[{Michel et~al.(2021)Michel, Schaus, \protect\BIBand{}
  Van~Hentenryck}]{minicp}
Michel L, Schaus P, Van~Hentenryck P (2021) Minicp: a lightweight solver for
  constraint programming. \emph{Mathematical Programming Computation}
  13(1):133--184, \urlprefix\url{http://dx.doi.org/10.1007/s12532-020-00190-7}.

\bibitem[{Nethercote et~al.(2007)Nethercote, Stuckey, Becket, Brand, Duck,
  \protect\BIBand{} Tack}]{minizinc}
Nethercote N, Stuckey PJ, Becket R, Brand S, Duck GJ, Tack G (2007) Minizinc:
  Towards a standard cp modelling language. \emph{International Conference on
  Principles and Practice of Constraint Programming (CP)}, 529--543 (Springer).

\bibitem[{Perron \protect\BIBand{} Furnon(2019)}]{ortools}
Perron L, Furnon V (2019) Or-tools.
  \urlprefix\url{https://developers.google.com/optimization/}.

\bibitem[{Pesant et~al.(1997)Pesant, Gendreaul, \protect\BIBand{}
  Rousseau}]{pesant1997genius}
Pesant G, Gendreaul M, Rousseau JM (1997) Genius-cp: A generic single-vehicle
  routing algorithm. \emph{International Conference on Principles and Practice
  of Constraint Programming (CP)}, 420--434 (Springer).

\bibitem[{Puget(1993)}]{puget1993set}
Puget JF (1993) Set constraints and cardinality operator: Application to
  symmetrical combinatorial problems. \emph{Third Workshop on Constraint Logic
  Programming--WCLP93}, 211.

\bibitem[{Ropke \protect\BIBand{} Pisinger(2006)}]{ropke2006adaptive}
Ropke S, Pisinger D (2006) An adaptive large neighborhood search heuristic for
  the pickup and delivery problem with time windows. \emph{Transportation
  science} 40(4):455--472.

\bibitem[{Rosenkrantz et~al.(1977)Rosenkrantz, Stearns, \protect\BIBand{}
  Lewis}]{rosenkrantz1977analysis}
Rosenkrantz DJ, Stearns RE, Lewis PM II (1977) An analysis of several
  heuristics for the traveling salesman problem. \emph{SIAM Journal on
  Computing} 6(3):563--581.

\bibitem[{Savelsbergh(1985)}]{savelsbergh1985local}
Savelsbergh MW (1985) Local search in routing problems with time windows.
  \emph{Annals of Operations Research} 4(1):285--305.

\bibitem[{Schaus et~al.(2024)Schaus, Derval, Delecluse, Michel,
  \protect\BIBand{} Hentenryck}]{MaxiCP2024}
Schaus P, Derval G, Delecluse A, Michel L, Hentenryck PV (2024) Maxicp: A
  constraint programming solver for scheduling and vehicle routing.
  \urlprefix\url{https://github.com/aia-uclouvain/maxicp}.

\bibitem[{Schulte \protect\BIBand{} Tack(2013)}]{schulte2013view}
Schulte C, Tack G (2013) View-based propagator derivation. \emph{Constraints}
  18:75--107.

\bibitem[{Schulte et~al.(2010)Schulte, Tack, \protect\BIBand{}
  Lagerkvist}]{schulte2006gecode}
Schulte C, Tack G, Lagerkvist MZ (2010) Modeling and programming with gecode .

\bibitem[{Shaw(1998)}]{shaw1998using}
Shaw P (1998) Using constraint programming and local search methods to solve
  vehicle routing problems. \emph{International Conference on Principles and
  Practice of Constraint Programming (CP)}, 417--431 (Springer).

\bibitem[{Thomas et~al.(2020)Thomas, Kameugne, \protect\BIBand{}
  Schaus}]{thomas2020insertion}
Thomas C, Kameugne R, Schaus P (2020) Insertion sequence variables for hybrid
  routing and scheduling problems. \emph{International Conference on
  Integration of Constraint Programming, Artificial Intelligence, and
  Operations Research (CPAIOR)}, 457--474 (Springer).

\bibitem[{Van~Hentenryck \protect\BIBand{} Michel(2014)}]{van2014domain}
Van~Hentenryck P, Michel L (2014) Domain views for constraint programming.
  \emph{International Conference on Principles and Practice of Constraint
  Programming (CP)}, 705--720 (Springer).

\end{thebibliography}

\newpage
\begin{APPENDICES}

\section{Domain Implementation Details}
\label{sec:implementation-invariants}

\subsection{Initialization}
When initialized, a compact sequence domain is defined on the set of nodes $V$ belonging to the graph $G(V, E)$, and its successor array $\directSuccessor{}$ contains one entry per node $v \in V$.
The two starting and ending nodes, $\first$ and $\last$, are identified upon the initialization of the domain. 
These nodes constitute the initial sequence of nodes represented by the domain, which form a cycle in the successor array.
\begin{equation}
    \directSuccessor{\first} = \last \land \directPredecessor{\last} = \first \land \directSuccessor{\last} = \first \land \directPredecessor{\first} = \last \label{eq:init-seqvar-succ}
\end{equation}
Note that a link from the end node $\last$ to the first node $\first$ is encoded, thus representing a cycle instead of a path, to ease invariant encoding in further equations.
The other nodes point to themselves as self-loops.
At initialization, the edge set forms a complete graph, with the exception of edges that originate at $\last$ and end at $\first$: $E = \{ (v_1, v_2) \mid v_1, v_2 \in V : v_1 \neq \last \land v_2 \neq \first \land v_1 \neq v_2 \} \cup \{ (\last,  \first) \}$. 
The set of nodes that may be inserted is defined as $I = V \setminus \{ \first, \last \}$

\subsection{Invariants}

First, we introduce one predicate that will be instrumental in defining the consistency invariants.
It determines (in constant time) if a node $v_i$ belongs to the partial sequence (\textit{i.e.} if $v_i \in \seq{s}$ holds) by verifying if it is not a self-loop:
\begin{equation}
	\mathrm{isMember}(v_i) \equiv \directSuccessor{i} \neq v_i .
	\label{eq:is-member}
\end{equation}

The lower-level consistency invariant expressed on the data structures are given next.
We first identify the invariants describing the channeling between some data structures.
\begin{align}
	& \forall v_i, v_j \in V: v_i \in \predecessors{j} \iff v_j \in \successors{i} \label{eq:pred-and-succ-link}\\
	& \forall v_i, v_j \in V: \directSuccessor{i} = v_j \iff \directPredecessor{j} = v_i \label{eq:successor-array}
\end{align}

The invariant \eqref{eq:pred-and-succ-link} ensures that each edge $(v_i, v_j)$ appears twice in the data structures: one for the adjacency set of the incoming node, and one for the outgoing node.
the invariant \eqref{eq:successor-array} ensures that for any two nodes $v_i$ and $v_j$ in the graph, $v_i$ is the predecessor of $v_j$ if and only if $v_j$ is the successor of $v_i$. 
It guarantees a consistent and mutual relationship between successor and predecessor links for every node in the graph.
In addition to channeling, specific invariants are used to maintain the counters previously introduced.
\begin{align}
	& nS = \left| \{ v_i \mid v_i \in V \land \mathrm{isMember}(v_i) \} \right| \label{eq:implem-nS-tracking} \\
	& \forall v_j \in I: nI_j = |\{ v_i \mid v_i \in \predecessors{j} \land \mathrm{isMember}(v_i) \}| \label{eq:implem-insertion-counter} \\
	& \forall v_i \in V : (nI_i \geq 1) \leftrightarrow (v_i \in I) \label{eq:implem-at-least-one-insertion}
\end{align}
The length of the partial sequence is tracked in \eqref{eq:implem-nS-tracking}.
Invariant \eqref{eq:implem-insertion-counter} tracks how many insertions are feasible for a given node $v_j \in I$. 
This counter is used to ensure that every node $v_j \in I$ has at least one insertion possible \eqref{eq:implem-at-least-one-insertion}: otherwise the node is not insertable and thus has 0 insertion.

Given the array of successors $\directSuccessor{}$, one can define the set of nodes reachable from a circuit containing node $v_i \in V$:
\begin{align}
	\mathrm{circuit}(v_i) &= \mathrm{circuit}(v_i, \emptyset)  \label{eq:reachable-definition-one-arg} \\
	\mathrm{circuit}(v_i, S) &= \begin{cases}
		S & \text{ if } v_i \in S \\
		\mathrm{circuit}(\directSuccessor{i}, S \cup \{ v\} ) & \text{ otherwise } \\
	\end{cases} \label{eq:reachable-definition-recursive}
\end{align}
Intuitively from \eqref{eq:reachable-definition-one-arg}, $\mathrm{circuit}(v_i)$ gives all nodes in the sequence from $\first$ to $\last$ if $v_i$ is in the sequence; otherwise it returns a set containing only $v_i$. 
This is done by following recursively the pointers $\directSuccessor{}$ of the successor array.
Using this definition, the implementation invariants are as follows.
\begin{align}
	& \directSuccessor{\last} = \first \land \directPredecessor{\first} = \last \label{eq:successor-of-last} \\ 
	& \forall v_i, v_j \in V, v_i \neq v_j: \left( \directSuccessor{i} \makebox[0pt][l]{\phantom{\directSuccessor{j}}} = v_j \implies v_j \in \successors{i}\right) \land \left( \directPredecessor{j} = v_i \implies v_i \in \predecessors{j}\right) \label{eq:only-one-succ-and-pred-is-member} \\
	& \forall v_i, v_j \in V, v_i \neq v_j: \directSuccessor{i} \neq \directSuccessor{j} \label{eq:unique-successor}\\ 
	& \forall v_i \in V: \mathrm{isMember}(v_i) \iff \mathrm{circuit}(v_i) = \mathrm{circuit}(\first) \label{eq:path-to-last} \\
	& \forall v_i, v_j \in V, v_i \neq v_j: \neg \mathrm{isMember}(v_i) \land \neg \mathrm{isMember}(v_j) \implies v_i \in \predecessors{j} \label{eq:clique-insertable-nodes} \\
	& \forall v_i, v_j, v_k \in V, v_i \neq v_j \neq v_k: \directSuccessor{i} = v_k \implies \left( v_i \in \predecessors{j} \iff v_k \in \successors{j} \right) \label{eq:detour-needs-two-edges}
\end{align}

Invariant~\eqref{eq:successor-of-last} ensures that the successor of the last node $\last$ being visited always points toward the first node $\first$.
Invariant \eqref{eq:only-one-succ-and-pred-is-member} enforces that the current successor of a node $v_i \in \seq{s}$ exists within the outgoing edges of the node $v_i$.
Invariants \eqref{eq:unique-successor} and \eqref{eq:path-to-last} ensure that only one sub-circuit is encoded within the successor array. 
All successors must be different, and if the successor of a node $v_i \in V$ is set (\textit{i.e.} $v_i$ belongs to the partial sequence) then $v_i$ belongs to the circuit of the first node $\first$ (and the last node $\last$ given that $\mathrm{circuit}(\first) = \mathrm{circuit}(\last)$).
Invariant \eqref{eq:clique-insertable-nodes} enforces that all insertable nodes form a clique.
Finally, \eqref{eq:detour-needs-two-edges} ensures that the two edges $(v_i, v_j), (v_j, v_k)$ needed to insert a node $v_j \in I$ after node $v_i$ (with $v_k$ being the current successor of $v_i$) are either both absent or both present.

Lastly, some invariants interact specifically with the required $R$ and excluded nodes $X$.
Moreover, the implementation may modify itself the set of required nodes $R$, so that it always contains nodes from the partial sequence: $\seq{s} \subseteq R$, given that nodes who are part of the partial sequence are always visited in all sequences from the domain.
The invariants are:
\begin{align}
	& v_i \in X \iff \predecessors{i} = \emptyset \iff \successors{i} = \emptyset \label{eq:excluded-means-no-edge}\\
	& \forall v_i \in V : \mathrm{isMember}(v_i) \implies v_i \in R \label{eq:if-succ-then-required} \\
	& \forall v_i \in R \cap I : nI_i > 1 \label{eq:required-insertable-counter-above-1}
\end{align}

Invariant \eqref{eq:excluded-means-no-edge} ensures that an excluded node has no edge attached to it.
Invariant~\eqref{eq:if-succ-then-required} captures the fact that nodes who are part of the partial sequence $\seq{s}$ are always visited, and thus considered as mandatory.
Finally, invariant~\eqref{eq:required-insertable-counter-above-1} guarantees that required nodes not part of the sequence have at least two insertions remaining.
Otherwise, if only one insertion remained for such a node, it would directly be used to add the node to the partial sequence $\seq{s}$.

\section{Domain updates}
\label{sec:implementation-updates}

\subsection{Insertion}  

Algorithm~\ref{alg:weak-seqvar-insertion} is used to perform an $\textit{Sq}.\mathrm{insert}(v_1, v_2)$ operation on a compact sequence domain $\textit{Sq}$, provided that $(v_1, v_2)$ is a feasible insertion.
The inserted node $v_2$ is first marked as required, removed from the insertable nodes and the size $nS$ of the partial sequence $\lbSeq$ is increased (lines \ref{alg:seqvar-insertion-update-R} to \ref{alg:seqvar-insertion-update-S}).
Then, every node $v_i$ linked to $v_2$ is inspected (line \ref{alg:seqvar-insertion-loop}).
If a node $v_i$ belongs to both the partial sequence $\lbSeq$ and to the ingoing edges of $v_2$, it could previously be used to perform an insertion using $(v_i, v_2)$. 
Given that $v_2$ is already inserted, such edges are removed (lines \ref{alg:weak-seqvar-insertion-detour-removal-in-sequence-begin} to \ref{alg:weak-seqvar-insertion-detour-removal-in-sequence-end}).
Otherwise, if $v_i$ does not belong to $\lbSeq$, two situations may occur: either $v_i$ cannot be placed directly after $v_1$, and thus cannot be placed directly after $v_2$ either, according to \eqref{eq:forbidden-insertions-update-insertion} (lines \ref{alg:weak-seqvar-insertion-detour-removal-kept-1} and \ref{alg:weak-seqvar-insertion-detour-removal-kept-2}); or the node $v_i$ can be inserted directly after $v_1$ and can now be inserted directly after $v_2$ as well, increasing its counter $nI_i$ (line \ref{alg:weak-seqvar-insertion-counter-increment}).
Lastly, the edges, successor and predecessor of $v_1, v_2$ and $v_3$ (the previous successor of $v_1$) are updated to reflect the insertion (lines \ref{alg:weak-seqvar-insertion-successor-update} to \ref{alg:seqvar-insertion-member-edges-update}). 

\begin{algorithm}[!ht]
	\caption{$\textit{Sq}.\mathrm{insert}(v_1, v_2)$}
	\label{alg:weak-seqvar-insertion}
	\SetKwInput{Input}{Input}
	\SetKwInput{Pre}{Precondition}
	\Input{$\textit{Sq}$: compact sequence domain, $(v_1, v_2)$ feasible insertion to perform}
	$R \gets R \cup \{ v_2 \}$ \label{alg:seqvar-insertion-update-R}\;
	$I \gets I \setminus \{ v_2 \}$\;
	$nI_{2} \gets 0 \label{alg:seqvar-insertion-counter-to-zero}$ \;
	$nS \gets nS + 1$ \label{alg:seqvar-insertion-update-S}\;
	$v_3 \gets \directSuccessor{1}$ \;
	\For{$v_i \in \predecessors{2} \label{alg:seqvar-insertion-loop} $}{
		\uIf{$\textit{Sq}.\mathrm{isMember}(v_i)$}{
			\If{$v_i \neq v_1 \label{alg:weak-seqvar-insertion-detour-removal-in-sequence-begin}$}{
				$\successors{i} \gets \successors{i} \setminus \{ v_2 \}, \predecessors{2} \gets \predecessors{2} \setminus \{ v_i \}$ \;
				$v_j \gets \directSuccessor{i}$ \;
				$\successors{2} \gets \successors{2} \setminus \{ v_j \}, \predecessors{j} \gets \predecessors{j} \setminus \{ v_2 \}$ \; \label{alg:weak-seqvar-insertion-detour-removal-in-sequence-end}
			}
		}
		\uElseIf{\textbf{not } $\textit{Sq}.\mathrm{canInsert}(v_1, v_i)$}{
			$\successors{2} \gets \successors{2} \setminus \{ v_i \}, \predecessors{i} \gets \predecessors{i} \setminus \{ v_2 \}$ \label{alg:weak-seqvar-insertion-detour-removal-kept-1}\;
			$\successors{i} \gets \successors{i} \setminus \{ v_2 \}, \predecessors{2} \gets \predecessors{2} \setminus \{ v_i \}$ \label{alg:weak-seqvar-insertion-detour-removal-kept-2}\;
		}
		\Else{
			$nI_i \gets nI_i + 1$ \label{alg:weak-seqvar-insertion-counter-increment}\;
		}
	}
	$\directSuccessor{1} \gets v_2, \directSuccessor{2} \gets v_3$ \label{alg:weak-seqvar-insertion-successor-update}\;
	$\directPredecessor{3} \gets v_2, \directPredecessor{2} \gets v_1$ \label{alg:weak-seqvar-insertion-predecessor-update}\;
	$\successors{1} \gets \successors{1} \setminus \{ v_3 \}, \predecessors{3} \gets \predecessors{3} \setminus \{ v_1 \}$ \label{alg:seqvar-insertion-member-edges-update}\;
\end{algorithm}

In the implementation, if $(v_1, v_2)$ does not define a feasible insertion, two situations may occur:
\begin{enumerate}
	\item If $v_2$ is already within the partial sequence and lies after $v_1$ (\textit{i.e.} $v_1, v_2 \in \lbSeq \land v_1 \precIn{} v_2$), the insertion is considered as having already been performed. 
	Nothing happens in this case.
	\item Otherwise, $v_2$ cannot be inserted. This corresponds to a domain wipeout.
\end{enumerate}

\subsection{NotBetween} The algorithm \ref{alg:seqvar-not-between} performs a $\textit{Sq}.\mathrm{notBetween}(v_1, v_2, v_3)$ operation on a compact sequence domain $\textit{Sq}$.
It iterates over the nodes $v_i$ between the nodes $v_1$ and $v_3$ (line \ref{alg:seqvar-detour-removal:for-loop}), removes the edges allowing to insert $v_2$ after $v_i$, and decrements its counter of insertion $nI_{2}$ (lines \ref{alg:seqvar-detour-removal:edge-removal} to \ref{alg:seqvar-detour-removal:counter-update}).
If the insertion counter reaches 0, the node must be excluded due to \eqref{eq:implem-at-least-one-insertion}, therefore removing the node $v_2$ from the set of insertable nodes $I$ and marking it as excluded (lines \ref{alg:seqvar-detour-removal:force-exclusion-X} and \ref{alg:seqvar-detour-removal:force-exclusion}).
In this case, all edges passing through the node are removed.
In contrast, if the counter of insertion reaches 1 and the node is required (line \ref{alg:notbetween-check-insert}), it is automatically inserted at its only remaining insertion (lines \ref{alg:notbetween-retrieve-insert} and \ref{alg:notbetween-insert}).

\begin{algorithm}[!ht]
	\caption{$\textit{Sq}.\mathrm{notBetween}(v_1, v_2, v_3)$}
	\label{alg:seqvar-not-between}
	\SetKwInput{Input}{Input}
	\Input{$\textit{Sq}$: compact sequence domain, $v_1, v_3$ two nodes in the partial sequence $\seq{s}$ between which node $v_2 \in V \setminus \seq{s}$ cannot appear.}
	\For{$v_i \in \seq{s} \mid v_1 \preceq v_i \prec v_3 \label{alg:seqvar-detour-removal:for-loop}$}{
		\If{$\textit{Sq}.\mathrm{canInsert}(v_i, v_2)$}{
			$v_j \gets \directSuccessor{i}$ \;
			$\predecessors{2} \gets \predecessors{2} \setminus \{ v_i \}, \successors{i} \gets \successors{i} \setminus \{ v_2 \}$ \label{alg:seqvar-detour-removal:edge-removal}\;
			$\predecessors{j} \gets \predecessors{j} \setminus \{ v_2 \}, \successors{2} \gets \successors{2} \setminus \{ v_j \}$ \;
			$nI_{2} \gets nI_{2} - 1$ \label{alg:seqvar-detour-removal:counter-update}\;
			\uIf{$nI_{2} = 0$}{
				$X \gets X \cup \{ v_2 \}$ \label{alg:seqvar-detour-removal:force-exclusion-X} \;
				$I \gets I \setminus \{ v_2 \}$ \label{alg:seqvar-detour-removal:force-exclusion} \;
				\For{$v_k \in I$}{
					$\predecessors{k} \gets \predecessors{k} \setminus \{ v_2 \}, \successors{k} \gets \successors{k} \setminus \{ v_2 \}$ \;
				}
				$\predecessors{2} \gets \emptyset, \successors{2} \gets \emptyset$ \;
				\textbf{return} \;
			}
		}
	}
	\If{$nI_{2} = 1 \text{ \textbf{and} } v_2 \in R \label{alg:notbetween-check-insert}$}{
		$\{v_i\} \gets \textit{Sq}.\mathrm{getInsert}(v_2)$ \label{alg:notbetween-retrieve-insert}\;
		$\textit{Sq}.\mathrm{insert}(v_i, v_2)$ \label{alg:notbetween-insert}\;
	}
\end{algorithm}

The algorithm \ref{alg:seqvar-not-between} is frequently called with $v_3$ being the direct successor of $v_1$: $v_1 \followedIn{} v_3$.
If so, only one iteration occurs at line \ref{alg:seqvar-detour-removal:for-loop}, with $v_i = v_1$, and edge deletion occurs in constant time if $v_2$ is not excluded or inserted.

A specific situation must be checked if $v_2$ already belongs to the partial sequence. 
In case the nodes are consecutive in the partial sequence ($v_1, v_2, v_3 \in \lbSeq \land v_1 \precIn{} v_2 \precIn{} v_3$), a domain wipeout is triggered.
No filtering occurs if $v_3 \preceq v_1$.

\subsection{Require}

Requiring a node $v_i$ is obtained by marking it as required ($R \gets R \cup \{v_i\}$) and inserting it if only one insertion remained for it $(nI_i = 1)$, similarly to lines \ref{alg:notbetween-retrieve-insert} and \ref{alg:notbetween-insert} of Algorithm \ref{alg:seqvar-not-between}. 
In case the node was excluded, a domain wipeout occurs.

\subsection{Exclude}

The exclusion of a node $v_i$ is obtained by marking it as excluded ($X \gets X \cup \{v_i\}$) and removing all edges connected to it.
Although edge removal occurs, only the insertion counter $nI_i$ changes.
The counters related to other nodes $v_j \in I \land v_i \neq v_j$ are not affected by edge removal, since any edge $(v_i, v_j)$ (or $(v_j, v_i)$) being removed could not define an insertion: no endpoint of the edge is within the partial sequence $\lbSeq$.
In case the node was required, a domain wipeout occurs.

\section{Filtering implementations}
\label{sec:filtering}

\subsection{TransitionTimes}

The filtering occurs in two steps. 
First, the bounds of the time windows of visited nodes are updated by iterating over the partial sequence $\lbSeq$:
\begin{align}
	\lfloor \mathbf{Start}_j \rfloor \gets & \max{\left(\lfloor \mathbf{Start}_j \rfloor, \lfloor \mathbf{Start}_i \rfloor + s_i + d_{i, j}\right)} & \quad \forall v_i \followedIn{} v_j \\
	\lceil \mathbf{Start}_i \rceil \gets  &\min{\left(\lceil \mathbf{Start}_i \rceil, \lceil \mathbf{Start}_j \rceil - s_i - d_{i, j}\right)} & \quad \forall v_i \followedIn{} v_j
\end{align}

This step can be implemented in $\mathcal{O}(|\seq{s}|)$.
Next, all insertions of a node $v_j \in I$ between consecutive nodes $v_i, v_k \in \lbSeq, v_i \followedIn{} v_k$ are inspected, similarly to line \ref{alg:distance:nested-loop} of Algorithm \ref{alg:distance}.
Each insertion can be used to define $\textit{ea}$ and $\textit{la}$: the earliest and latest arrival at node $v_j$, if performed.
\begin{align}
	\textit{ea} = \lfloor \mathbf{Start}_i \rfloor + s_i + d_{i,j} \\
	\textit{la} = \lceil \mathbf{Start}_k \rceil - s_j - d_{jk}
\end{align}
Insertion corresponding to time window violations are removed:
\begin{equation}
		(\textit{ea} > \lceil \mathbf{Start}_j \rceil) \lor (\textit{la} < \lfloor \mathbf{Start}_j \rfloor) \lor (\textit{ea} > \textit{la}) \Longrightarrow
		\seq{S}.\mathrm{notBetween}(v_i, v_j, v_k)
	\label{eq:tt:insertion-filtering}
\end{equation}

In \eqref{eq:tt:insertion-filtering}, if either reaching node $v_j$ cannot be done within its time window, or if doing a detour through it would violate the time window of $v_k$, the insertion is removed.
This shares some similarities with the work from \cite{savelsbergh1985local}, where time windows are used for checking the validity of moves in local search.
Finally, a time window update is performed for nodes $v_j \in R \setminus \seq{s}$ that are required but not yet inserted:
\begin{align}
	\lfloor \mathbf{Start}_j \rfloor \gets &\max{\left(\lfloor \mathbf{Start}_j \rfloor, \min_{v_i \in \seq{S}.\mathrm{getInsert}(v_j)}{\lfloor \mathbf{Start}_i \rfloor + s_i + d_{i, j}}\right)}
	\label{eq:tt:tw_update_earliest} \\
	\lceil \mathbf{Start}_j \rceil \gets &  \min{\left(\lceil \mathbf{Start}_j \rceil, \max_{v_i \in \seq{S}.\mathrm{getInsert}(v_j), v_i \;\followedIn{}\; v_k}{\lceil \mathbf{Start}_k \rceil - s_j - d_{j, k}}\right)}
	\label{eq:tt:tw_update_latest}
\end{align}

The visit time of such nodes are updated based on their earliest predecessor and latest successor in \eqref{eq:tt:tw_update_earliest}, \eqref{eq:tt:tw_update_latest} respectively. 
The time complexity of the filtering is the same as in the Distance constraint: $\mathcal{O}(|E|)$.
Although we reason over a set of required nodes as in \cite{thomas2020insertion}, we do not ensure that a valid transition exists among all required nodes: this problem is NP-complete and would be too computationally expensive to perform at every filtering.

\subsection{Precedence}

The filtering consists of two main steps.

\begin{enumerate}
	\item First, it considers the partial sequence $\seq{s}$ and ensures that nodes belonging to $\seq{o} \cap \seq{s}$ appears in the same order in both $\seq{o}$ and $\seq{s}$.
	This step can be implemented in $\mathcal{O}(\max(|\seq{o}|, |\seq{s}|))$ and may trigger a failure.
	\item Next, it considers the insertable nodes from the ordering $\seq{o}$ to respect, removing insertions that would violate the ordering if performed.
	This is described in Algorithm~\ref{alg:precedence}.
	A set $Q$ tracks the nodes whose insertions will be filtered.
	The main loop iterates over each node $v_k \in \seq{o}$, as well as the last node $\last$.
	If $v_k$ is insertable (\textit{i.e.}, it is not yet in $\seq{s}$), it is added to $Q$ for potential filtering (line \ref{alg:precedence:node-to-filter}).
	On the contrary, if $v_k$ is already in $\seq{s}$, it serves as a boundary for the nodes in $Q$.
	For each node $v_j$ in $Q$, we enforce that it can only be inserted between $v_i$ and $v_k$, where:
	\begin{itemize}
		\item $v_i$ is the previous node in $\seq{o}$ that also belongs to $\seq{s}$, found in a previous iteration.
		\item $v_k$ (currently being iterated over) is the next node after $v_j$ in $\seq{o}$ that belongs to $\lbSeq$.
	\end{itemize}
	To enforce this consistency, we forbid all sequences where $v_j$ appears before $v_i$ or after $v_k$(line \ref{alg:precedence:detour-removal}), preserving only insertions consistent with the required precedence.
	
	\begin{algorithm}[!ht]
		\SetNoFillComment
		\caption{$\mathrm{Precedence}(\protect\seq{S}, \protect\seq{o})$ constraint filtering for invalid insertions.}
		\label{alg:precedence}
		$Q \gets \emptyset$  \;
		$v_i \gets \first$ \;
		\For{$v_k \in \seq{o} \cdot \last \label{alg:precedence:main-loop}$}{
			\uIf{$\seq{S}.\mathrm{isInsertable}(v_k)$}{
				$Q \gets Q \cup \{ v_k \}$ \label{alg:precedence:node-to-filter}\;
			}
			\uElseIf{$\seq{S}.\mathrm{isMember}(v_k)$}{
				\For{$v_j \in Q \label{alg:precedence:detour-removal}$ }{
					\tcp*[l]{$v_j$ can only be inserted between $v_i$ and $v_k$}
					$\seq{S}.\mathrm{notBetween}(\first, v_j, v_i)$ \label{alg:precedence:detour-removal1} \;
					$\seq{S}.\mathrm{notBetween}(v_k, v_j, \last)$ \label{alg:precedence:detour-removal2}\;
				}
				$Q \gets \emptyset$ \;
				$v_i \gets v_k$ \;
			}
		}
	\end{algorithm}
	
\end{enumerate}

The time complexity of the filtering is dominated by Algorithm \ref{alg:precedence}, running in $\mathcal{O}(|\seq{o}| \cdot |V|)$.

\begin{example}

	An example of the filtering is illustrated in Figure \ref{fig:precedence-constraint-example}.
	The current partial sequence is $\lbSeq = \first \cdot v_1 \cdot v_3 \cdot v_5 \cdot \last$.
	The ordering to enforce is $\seq{o} = v_2 \cdot v_3 \cdot v_4$.
	The first step of the filtering, checking the ordering, does not trigger any failure.
	Algorithm \ref{alg:precedence} is then used for the second step. 
	The four iterations done at line \ref{alg:precedence:main-loop} are as follows:
	\begin{enumerate}
		\item $v_k = v_2$. Given that $v_2 \in I$, $Q$ becomes $\{ v_2 \}$.
		\item $v_k = v_3$, which belong to $\seq{s}$. The nodes in the queue $Q$ must be placed between $v_i$ and $v_k$ (here forcing $v_2$ to be placed between $\first$ and $v_3$).
		This is done by two calls: $\seq{S}.\mathrm{notBetween}(\first, v_2, \first)$ (doing nothing) and $\seq{S}.\mathrm{notBetween}(v_3, v_2, \last)$. Finally, $v_i$ becomes $v_3$ and $Q$ is emptied.
		\item $v_k = v_4$. Given that $v_4 \in I$, $Q$ becomes $\{ v_4 \}$.
		\item $v_k = \last$, enforcing nodes in $Q$ to be placed between $v_3$ and $\last$. The two calls at lines \ref{alg:precedence:detour-removal1}, \ref{alg:precedence:detour-removal2} are $\seq{S}.\mathrm{notBetween}(\first, v_4, v_3)$ and $\seq{S}.\mathrm{notBetween}(\last, v_4, \last)$ (the latter doing nothing).
	\end{enumerate}

\begin{figure}[!ht]
    \FIGURE{
        \begin{minipage}{0.47\linewidth}
            \centering
            	\begin{tikzpicture}[node distance=0.8cm and 0.8cm,
		>=LaTeX,
		member/.style={
			minimum size=0.5cm,
			circle, 
			inner sep=1.5pt
		},
		possible/.style={
			minimum size=0.5cm,
			circle, 
			inner sep=1.5pt
		},
		required/.style={
			minimum size=0.5cm,
			circle, 
			inner sep=1.5pt
		},
		excluded/.style={
			minimum size=0.5cm,
			circle, 
			inner sep=1.5pt
		},
		insertionDirected/.style={
			densely dashed,
			gray!80!white,
			very thick,
			-{Stealth[gray!80!white]}
		},
		insertion/.style={
			densely dashed,
			gray!80!white,
			very thick,
			{Stealth[gray!80!white]}-{Stealth[gray!80!white]}
		},
		successor/.style={
			very thick,
			-{Stealth[black]}
		},
		]
		
		\node [member](first) {$\first$};
		\node [member, right = of first] (v1) {$v_1$};
		\node [member, right = of v1] (v3) {$v_3$};
		\node [member, right = of v3] (v5) {$v_5$};
		\node [member, right = of v5] (last) {$\last$};
		
		\node [possible, above = of v3] (v2) {$v_2$};
		\node [possible, below = of v3] (v4) {$v_4$};
		
		\draw [insertionDirected, bend left=15] (first) to (v2);		
		\draw [insertion, bend left=15] (v1) to (v2);
		\draw [insertion] (v3) -- (v2);
		\draw [insertion, bend right=15] (v5) to (v2);
		\draw [insertionDirected, bend left=15] (v2) to (last);
		
		\draw [insertionDirected, bend right=15] (first) to (v4);
		\draw [insertion, bend right=15] (v1) to (v4);
		\draw [insertion] (v3) to (v4);
		\draw [insertion, bend left=15] (v5) to (v4);
		\draw [insertionDirected, bend right=15] (v4) to (last);
		
		\draw [successor] (first) -- (v1);
		\draw [successor] (v1) -- (v3);
		\draw [successor] (v3) -- (v5);
		\draw [successor] (v5) -- (last);	
		
	\end{tikzpicture}
        \end{minipage}
        \hfill
        \begin{minipage}{0.47\linewidth}
            \centering
            	\begin{tikzpicture}[node distance=0.8cm and 0.8cm,
		>=LaTeX,
		member/.style={
			minimum size=0.5cm,
			circle, 
			inner sep=1.5pt
		},
		possible/.style={
			minimum size=0.5cm,
			circle, 
			inner sep=1.5pt
		},
		required/.style={
			minimum size=0.5cm,
			circle, 
			inner sep=1.5pt
		},
		excluded/.style={
			minimum size=0.5cm,
			circle, 
			inner sep=1.5pt
		},
		insertionDirected/.style={
			densely dashed,
			gray!80!white,
			very thick,
			-{Stealth[gray!80!white]}
		},
		insertion/.style={
			densely dashed,
			gray!80!white,
			very thick,
			{Stealth[gray!80!white]}-{Stealth[gray!80!white]}
		},
		successor/.style={
			very thick,
			-{Stealth[black]}
		},
		]
		
		\node [member](first) {$\first$};
		\node [member, right = of first] (v1) {$v_1$};
		\node [member, right = of v1] (v3) {$v_3$};
		\node [member, right = of v3] (v5) {$v_5$};
		\node [member, right = of v5] (last) {$\last$};
		
		\node [possible, above = of v3] (v2) {$v_2$};
		\node [possible, below = of v3] (v4) {$v_4$};
		
		\draw [insertionDirected, bend left=15] (first) to (v2);		
		\draw [insertion, bend left=15] (v1) to (v2);
		\draw [insertionDirected] (v2) -- (v3);
		
		\draw [insertionDirected] (v3) to (v4);
		\draw [insertion, bend left=15] (v5) to (v4);
		\draw [insertionDirected, bend right=15] (v4) to (last);
		
		\draw [successor] (first) -- (v1);
		\draw [successor] (v1) -- (v3);
		\draw [successor] (v3) -- (v5);
		\draw [successor] (v5) -- (last);	
		
	\end{tikzpicture}
        \end{minipage}
    }{
        Precedence constraint with $\seq{o} = v_2 \cdot v_3 \cdot v_4 $, before filtering (left) and after filtering (right).
        \label{fig:precedence-constraint-example}
    }{Edges $(v_2, v_4)$ and $(v_4, v_2)$ are present but not drawn for clarity.}
\end{figure}
    
\end{example}

\subsection{Cumulative}

Firstly, to ensure that the start $\bm{s}_i$ and the end $\bm{e}_i$ of an activity $i \in A$ are visited together and in a valid order, two constraints are added per request. 
The constraint $\nodeRequired{Sq}{\bm{s}_i} = \nodeRequired{Sq}{\bm{e}_i}$ ensures that the two nodes $\bm{s}_i, \bm{e}_i$ appear together, and $\mathrm{Precedence}(Sq, (\bm{s}_i \cdot \bm{e}_i))$ ensures that the start $\bm{s}_i$ appears before the end $\bm{e}_i$.
The remaining filtering consists of three steps: computing a load profile, filtering the partially inserted activities and then the non-inserted activities.

A lower bound on the load profile is computed based on the activities that are fully or partially inserted, representing the sum of resources consumed at each node. 
It is described by three values for each inserted node $v \in \seq{s}$:
\begin{itemize}
	\item $\loadBefore{v}$ for the accumulated load before the visit of $v$ ;
	\item $\loadAt{v}$ for the accumulated load at the visit of $v$ ;
	\item $\loadAfter{v}$ for the accumulated load between the visit of $v$ and its successor.
\end{itemize}

A load profile example is shown in Figure \ref{fig:cumulative-load-profile}.
Using three values per node to represent the load profile is needed to accurately represent whether activities may be put between a node and its successor, as shown in Figure \ref{fig:cumulative-load-3-values}.

For each fully inserted activity $i$, the load of $i$ is added between the nodes: $\forall \bm{s}_i \prec v \preceq \bm{e}_i : \loadBefore{v} \gets \loadBefore{v} + \bm{l}_i$, $\forall \bm{s}_i \preceq v \prec \bm{e}_i : \loadAt{v} \gets \loadAt{v} + \bm{l}_i \land \loadAfter{v} \gets \loadAfter{v} + \bm{l}_i$.
When considering only the inserted activities, it follows that $\forall v \in \seq{s} : \loadAt{v} = \loadAfter{v}$.
Those equalities do not hold anymore after considering the partially inserted activities.

For partially inserted activities with a start inserted, a node from the partial sequence $\seq{s}$ is used instead of the non-inserted end node to compute the load. 
It corresponds to the earliest node after the start node or the start node itself, after which the non-inserted end can be inserted.
Partially inserted activities with the end inserted behave similarly, considering the latest node preceding the end, after which the start node can be inserted.

\begin{example}
	On Figure \ref{fig:cumulative-load-profile}, activity 0 is partially inserted. The earliest predecessor for $e_0$ is $s_0$, which only contributes to the load $\loadAfter{s_0}$.
	For activity 1, the earliest predecessor of $e_1$ is $e_2$, contributing to $\loadAt{s_1}, \loadAfter{s_1}, \loadBefore{e_2}, \loadAt{e_2}$ (but not $\loadAfter{e_2}$).
\end{example}

Setting an entry in $\loadBefore{}, \loadAt{}$ or $\loadAfter{}$ exceeding the capacity triggers a failure.

Given the load profile, we filter insertions for the partially inserted activities and the non-inserted activities.
Algorithm \ref{alg:cumul_filtering} depicts the filtering for every non-inserted activity $i \in A$ whose start $\bm{s}_i$ is inserted but not its corresponding end $\bm{e}_i$. 
It first finds the closest node $v$ after which the end $\bm{e}_i$ can be inserted (line \ref{alg:cumul_filtering_find_closest}), triggering a failure if no such node exists (line \ref{alg:cumul_filtering_failure}). 
Note that this closest node $v$ was already used to set the load of activity $i$ during the load profile computation, and therefore is already marked as valid (line \ref{alg:get_next_v_before_iteration}). 
The next nodes to inspect are therefore the nodes following $v$, in order.
As soon as the capacity occurring at a node $v$ does not allow inserting the end $\bm{e}_i$ of the activity, all insertions between this invalid node $v$ and the end $\last$ of the sequence are removed (line \ref{alg:cumul_filtering_exceeding_capa}).
A similar filtering is performed in a mirror fashion, considering the non-inserted activities whose ends are inserted but not the corresponding start. 

Finally, the filtering from \cite{thomas2020insertion} is used for non-inserted activities, inspecting every start (and end) of activities, checking if a matching end (and start) can be found, and removing insertions when no match exists.

\begin{algorithm}[!ht]
	\caption{Filtering of the $\mathrm{Cumulative}(\seq{S}, \bm{s}, \bm{e}, \bm{l}, c)$ for activities with start inserted.}
	\label{alg:cumul_filtering}
	\SetKwInput{Input}{Input}
	\SetKwRepeat{Do}{do}{while}
	\SetKw{MyIf}{if}
	\SetKw{MyThen}{then}
	\SetKw{MyBreak}{break}
	\SetKw{MyElse}{else}
	\SetKw{MyReturn}{return}
	\Input{$\seq{S}$: sequence variable, $\bm{s}, \bm{e}, \bm{l}$: start, end and load of activities, $c$: capacity.}
	\For{$i \in A \text{ \textbf{s.t.} } \left(\seq{S}.\mathrm{isMember}(\bm{s}_i) \text{ \textbf{and not} } \seq{S}.\mathrm{isMember}(\bm{e}_i)\right)$}{
		$v \gets \bm{s}_i$\; 
		\While{\textbf{not} $\seq{S}.\mathrm{canInsert}(v, \bm{e}_i)$ \label{alg:cumul_filtering_find_closest}}{
			$v \gets \seq{S}.\mathrm{getNext}(v)$ \;
			\If{$v = \last$}{
				\textbf{return failure} \label{alg:cumul_filtering_failure}\;
			}
		}
		$v \gets \seq{S}.\mathrm{getNext}(v)$ \label{alg:get_next_v_before_iteration}\;
		\While{$v \neq \last$}{
			\If{$\max(\loadBefore{v}, \loadAt{v}) + \bm{l}_i > c$ \label{alg:cumul_filtering_exceeding_capa}}{
				$\seq{S}.\mathrm{notBetween}(v, \bm{e}_i, \last)$ \;
				\textbf{break} \;
			}
			$v \gets \seq{S}.\mathrm{getNext}(v)$ \;
		}
	}
\end{algorithm}

\begin{figure}[!ht]
    \FIGURE{
        	\definecolor{colorActivity0}{HTML}{D55E00}
	\definecolor{colorActivity1}{HTML}{56B4E9}
	\definecolor{colorActivity2}{HTML}{009E73}
	\definecolor{colorActivity3}{HTML}{E69F00}
    
    \begin{tikzpicture}[node distance=0.8cm and 1.5cm,
		>=LaTeX,
		member/.style={
			circle, 
			inner sep=1.5pt,
			minimum size=4mm,
            scale=0.8,
		},
		possible/.style={
			circle, 
			inner sep=1.5pt,
			minimum size=4mm,
            scale=0.8,
		},
		excluded/.style={
			circle, 
			inner sep=1.5pt,
			minimum size=6mm,
		},
		insertionDirected/.style={
			densely dashed,
			gray!80!white,
			very thick,
			-{Stealth[gray!80!white]}
		},
		insertion/.style={
			densely dashed,
			gray!80!white,
			very thick,
			{Stealth[gray!80!white]}-{Stealth[gray!80!white]}
		},
		successor/.style={
			very thick,
			-{Stealth[black]}
		},
		xtick/.style={
			rectangle,
			minimum height=0.6cm,
		},
		]
		\newcommand{\distanceBelowLeftGraph}{2.5cm and 0.4cm}
		\newcommand{\distanceBelowRightGraph}{2.5cm and -0.2cm}
		
		
		\node [member](start) {$\first$};
		\node [member, right= of start, fill=colorActivity0](s0) {$s_0$};
		\node [member, right= of s0, fill=colorActivity1](s1) {$s_1$};
		\node [member, right= of s1, fill=colorActivity2](e2) {$e_2$};
		\node [member, right= of e2, fill=colorActivity3](s3) {$s_3$};
		\node [member, right= of s3, fill=colorActivity3](e3) {$e_3$};
		\node [member, right= of e3](end) {$\last$};
		
		\coordinate (midway1) at ($(s0)!0.5!(s1)$);
		\coordinate (midway2) at ($(s1)!0.5!(e3)$);
		
		\node [possible, below = of midway1, fill=colorActivity2] (s2) {$s_2$};
		\node [possible, below = of midway2, fill=colorActivity1] (e1) {$e_1$};
		\node [possible, above = of midway2, fill=colorActivity0] (e0) {$e_0$};
		
		\draw [successor] (start) edge (s0);
		\draw [successor] (s0) edge (s1);
		\draw [successor] (s1) edge (e2);
		\draw [successor] (e2) edge (s3);
		\draw [successor] (s3) edge (e3);
		\draw [successor] (e3) edge (end);
		
		\draw [insertionDirected, bend left=15] (s0) edge (e0);
		\draw [insertion, bend left=10] (s1)  edge (e0);
		\draw [insertion, bend left=10] (e2)  edge (e0);
		\draw [insertion, bend right=10] (s3)  edge (e0);
		\draw [insertion, bend right=10] (e3)  edge (e0);
		\draw [insertionDirected, bend left=15] (e0)  edge (end);
		
		\draw [insertionDirected, bend right=10] (e2)  edge (e1);
		\draw [insertion, bend left=10] (s3)  edge (e1);
		\draw [insertionDirected, bend right=10] (e1)  edge (e3);
		
		\draw [insertionDirected, bend right=10] (start) edge (s2);
		\draw [insertion, bend right=10] (s0)  edge (s2);
		\draw [insertion, bend left=10] (s1)  edge (s2);
		\draw [insertionDirected, bend right=10] (s2)  edge (e2);
		
		
		\node[below left = \distanceBelowLeftGraph of s0, xtick, anchor=north] (S0Before) {$\loadBefore{s_0}$};
		\node[below left = \distanceBelowLeftGraph of s1, xtick, anchor=north] (S1Before) {$\loadBefore{s_1}$};
		\node[below left = \distanceBelowLeftGraph of e2, xtick, anchor=north] (E2Before) {$\loadBefore{e_2}$};
		\node[below left = \distanceBelowLeftGraph of s3, xtick, anchor=north] (S3Before) {$\loadBefore{s_3}$};
		\node[below left = \distanceBelowLeftGraph of e3, xtick, anchor=north] (E3Before) {$\loadBefore{e_3}$};
		\node[below left = \distanceBelowLeftGraph of end, xtick, anchor=north] (EndBefore) {$\loadBefore{\last}$};
		
		\node[below right = \distanceBelowRightGraph of start, xtick, anchor=north] (Start) {$\loadAt{\first}$};
		\node[below right = \distanceBelowRightGraph of s0, xtick, anchor=north] (S0) {$\loadAt{s_0}$};
		\node[below right = \distanceBelowRightGraph of s1, xtick, anchor=north] (S1) {$\loadAt{s_1}$};
		\node[below right = \distanceBelowRightGraph of e2, xtick, anchor=north] (E2) {$\loadAt{e_2}$};
		\node[below right = \distanceBelowRightGraph of s3, xtick, anchor=north] (S3) {$\loadAt{s_3}$};
		\node[below right = \distanceBelowRightGraph of e3, xtick, anchor=north] (E3) {$\loadAt{e_3}$};
		\node[below right = \distanceBelowRightGraph of end, xtick, anchor=north] (End) {$\loadAt{\last}$};
		
		\coordinate (Start-after) at ($(Start)!0.5!(S0Before)$);
		\coordinate (S0-after) at ($(S0)!0.5!(S1Before)$);
		\coordinate (S1-after) at ($(S1)!0.5!(E2Before)$);
		\coordinate (E2-after) at ($(E2)!0.5!(S3Before)$);
		\coordinate (S3-after) at ($(S3)!0.5!(E3Before)$);
		\coordinate (E3-after) at ($(E3)!0.5!(EndBefore)$);
		
		\node[xtick] at (Start-after) {$\loadAfter{\first}$};
		\node[xtick] at (S0-after) {$\loadAfter{s_0}$};
		\node[xtick] at (S1-after) {$\loadAfter{s_1}$};
		\node[xtick] at (E2-after) {$\loadAfter{e_2}$};
		\node[xtick] at (S3-after) {$\loadAfter{s_3}$};
		\node[xtick] at (E3-after) {$\loadAfter{e_3}$};
		
		\coordinate[above left = 0.5cm and -0.1cm of S0] (A0-begin);
		\coordinate[above right = 0cm and -0.1cm of S0] (A0-end);
		\fill [fill=colorActivity0, fill opacity=1] (A0-begin) rectangle (A0-end);
		\coordinate (A0-center) at ($ (A0-begin)!0.5!(A0-end) $);
		\node at (A0-center) {0};
		%
		\coordinate[above left = 0.25cm and -0.1cm of S1] (A1-begin);
		\coordinate[above right = 0cm and -0.1cm of E2] (A1-end);
		\fill [fill=colorActivity1, fill opacity=1] (A1-begin) rectangle (A1-end);
		\coordinate (A1-center) at ($ (A1-begin)!0.5!(A1-end) $);
		\node at (A1-center) {\scriptsize{1}};
		%
		\coordinate[above left = 0.5cm and -0.1cm of E2Before] (A2-begin);
		\coordinate[above right = 0.25cm and -0.1cm of E2Before] (A2-end);
		\fill [fill=colorActivity2, fill opacity=1] (A2-begin) rectangle (A2-end);
		\coordinate (A2-center) at ($ (A2-begin)!0.5!(A2-end) $);
		\node at (A2-center) {\scriptsize{2}};
		%
		\coordinate[above left = 0.75cm and -0.1cm of S3] (A3-begin);
		\coordinate[above right = 0cm and -0.1cm of E3Before] (A3-end);
		\fill [fill=colorActivity3, fill opacity=1] (A3-begin) rectangle (A3-end);
		\coordinate (A3-center) at ($ (A3-begin)!0.5!(A3-end) $);
		\node at (A3-center) {3};
		%
		\coordinate[above left = 0cm and 0cm of Start] (origin);
		\coordinate[above right = 0cm and 0cm of End] (endingSequence);
		\draw[->, thick] (origin) -- (endingSequence);
		\node[right=0cm of endingSequence] (label-sequence) {$\seq{s}$};
		%
		\coordinate[above = 1.4cm of origin] (endingCapacity);
		\draw[->, thick] (origin) -- (endingCapacity);
		\node[left =0cm of endingCapacity] (label-load) {Load};
		\coordinate[above= 1cm of origin] (coord4);
		\coordinate[above= 0.75cm of origin] (coord3);
		\coordinate[above= 0.5cm of origin] (coord2);
		\coordinate[above= 0.25cm of origin] (coord1);
		\node[left=0.1cm of coord4] (label4) {\footnotesize{4}};
		\node[left=0.1cm of coord3] (label3) {\footnotesize{3}};
		\node[left=0.1cm of coord2] (label2) {\footnotesize{2}};
		\node[left=0.1cm of coord1] (label1) {\footnotesize{1}};
		\draw (coord4) -- (label4);
		\draw (coord3) -- (label3);
		\draw (coord2) -- (label2);
		\draw (coord1) -- (label1);
		
	\end{tikzpicture}
    }{
        A sequence variable $\seq{S}$ with partial sequence  $\seq{s} = \first \cdot s_0 \cdot s_1 \cdot e_2 \cdot s_3 \cdot e_3 \cdot \last$ (top) and its corresponding load profile (bottom) when using a $\mathrm{Cumulative}(\seq{S}, (s_0, s_1, s_2, s_3), (e_0, e_1, e_2, e_3), (2, 1, 1, 3), 4)$ constraint.
	    Edges between insertable nodes are not shown.
        For the partially inserted activity $1$, the closest node after which $e_1$ can be inserted is $e_2$ hence rectangle 1 ends at $\loadAt{e_2}$ instead of $\loadAt{s_1}$ on the load profile.
        \label{fig:cumulative-load-profile}
    }{}
\end{figure}

\begin{figure}[!ht]
    \FIGURE{
        \definecolor{colorActivity0}{HTML}{D55E00}
\definecolor{colorActivity1}{HTML}{56B4E9}
\definecolor{colorActivity2}{HTML}{009E73}
\definecolor{colorActivity3}{HTML}{E69F00}

\begin{tikzpicture}[node distance=1.4cm and 1.2cm,
	every node/.style={scale=1.0},
	>=LaTeX,
	member/.style={
		circle, 
		inner sep=1.5pt,
			minimum size=4mm,
            scale=0.8,
	},
	possible/.style={
		circle, 
		inner sep=1.5pt,
			minimum size=4mm,
            scale=0.8,
	},
	excluded/.style={
		circle, 
		inner sep=1.5pt,
		minimum size=6mm,
	},
	insertionDirected/.style={
		densely dashed,
		gray!80!white,
		very thick,
		-{Stealth[gray!80!white]}
	},
	insertion/.style={
		densely dashed,
		gray!80!white,
		very thick,
		{Stealth[gray!80!white]}-{Stealth[gray!80!white]}
	},
	successor/.style={
		very thick,
		-{Stealth[black]}
	},
	xtick/.style={
		rectangle,
		minimum height=0.6cm,
        scale=0.9,
	},
	]
	\newcommand{\distanceBelowLeftGraph}{1.0cm and 0.4cm}
	\newcommand{\distanceBelowRightGraph}{1.0cm and -0.2cm}
	
	
	\node [member](partAstart) {$\first$};
	\node [member, right= of partAstart, fill=colorActivity0](partAs0) {$s_0$};
	\node [member, right= of partAs0, fill=colorActivity1](partAe1) {$e_1$};
	\node [member, right= of partAe1](partAend) {$\last$};
	
	\draw [successor] (partAstart) edge (partAs0);
	\draw [successor] (partAs0) edge (partAe1);
	\draw [successor] (partAe1) edge (partAend);
	
	
	\node[below left = \distanceBelowLeftGraph of partAs0, xtick, anchor=north] (partAS0Before) {$\loadBefore{s_0}$};
	\node[below left = \distanceBelowLeftGraph of partAe1, xtick, anchor=north] (partAE1Before) {$\loadBefore{e_1}$};
	\node[below left = \distanceBelowLeftGraph of partAend, xtick, anchor=north] (partAEndBefore) {$\loadBefore{\last}$};
	
	\node[below right = \distanceBelowRightGraph of partAstart, xtick, anchor=north] (partAStart) {$\loadAt{\first}$};
	\node[below right = \distanceBelowRightGraph of partAs0, xtick, anchor=north] (partAS0) {$\loadAt{s_0}$};
	\node[below right = \distanceBelowRightGraph of partAe1, xtick, anchor=north] (partAE1) {$\loadAt{e_1}$};
	\node[below right = \distanceBelowRightGraph of partAend, xtick, anchor=north] (partAEnd) {$\loadAt{\last}$};
	
	\coordinate (partAStart-after) at ($(partAStart)!0.5!(partAS0Before)$);
	\coordinate (partAS0-after) at ($(partAS0)!0.5!(partAE1Before)$);
	\coordinate (partAE1-after) at ($(partAE1)!0.5!(partAEndBefore)$);
	
	\node[xtick] at (partAStart-after) {$\loadAfter{\first}$};
	\node[xtick] at (partAS0-after) {$\loadAfter{s_0}$};
	\node[xtick] at (partAE1-after) {$\loadAfter{e_1}$};
	
	\coordinate[above left = 0.25cm and -0.1cm of partAS0] (partAA0-begin);
	\coordinate[above right = 0cm and -0.1cm of partAS0] (partAA0-end);
	\fill [fill=colorActivity0, fill opacity=1] (partAA0-begin) rectangle (partAA0-end);
	\coordinate (partAA0-center) at ($ (partAA0-begin)!0.5!(partAA0-end) $);
	\node at (partAA0-center) {\scriptsize{0}};
	%
	\coordinate[above left = 0.25cm and -0.1cm of partAE1Before] (partAA1-begin);
	\coordinate[above right = 0cm and -0.1cm of partAE1Before] (partAA1-end);
	\fill [fill=colorActivity1, fill opacity=1] (partAA1-begin) rectangle (partAA1-end);
	\coordinate (partAA1-center) at ($ (partAA1-begin)!0.5!(partAA1-end) $);
	\node at (partAA1-center) {\scriptsize{1}};
	%
	%
	\coordinate[above left = 0cm and 0cm of partAStart] (partAorigin);
	\coordinate[above right = 0cm and 0cm of partAEnd] (partAendingSequence);
	\draw[->, thick] (partAorigin) -- (partAendingSequence);
	\node[right=0cm of partAendingSequence] (partAlabel-sequence) {$\seq{S}$};
	%
	\coordinate[above = 1.0cm of partAorigin] (partAendingCapacity);
	\draw[->, thick] (partAorigin) -- (partAendingCapacity);
	\node[left =0cm of partAendingCapacity] (partAlabel-load) {Load};
	\coordinate[above= 0.5cm of partAorigin] (partAcoord2);
	\coordinate[above= 0.25cm of partAorigin] (partAcoord1);
	\node[left=0.1cm of partAcoord2] (partAlabel2) {\footnotesize{2}};
	\node[left=0.1cm of partAcoord1] (partAlabel1) {\footnotesize{1}};
	\draw (partAcoord2) -- (partAlabel2);
	\draw (partAcoord1) -- (partAlabel1);

	
	\node [member, right=2.4cm of partAend](partBstart) {$\first$};
	\node [member, right= of partBstart, fill=colorActivity0](partBs0) {$s_0$};
	\node [member, right= of partBs0, fill=colorActivity0](partBe0) {$e_0$};
	\node [member, right= of partBe0](partBend) {$\last$};
	
	\draw [successor] (partBstart) edge (partBs0);
	\draw [successor] (partBs0) edge (partBe0);
	\draw [successor] (partBe0) edge (partBend);
	
	
	\node[below left = \distanceBelowLeftGraph of partBs0, xtick, anchor=north] (partBS0Before) {$\loadBefore{s_0}$};
	\node[below left = \distanceBelowLeftGraph of partBe0, xtick, anchor=north] (partBE0Before) {$\loadBefore{e_0}$};
	\node[below left = \distanceBelowLeftGraph of partBend, xtick, anchor=north] (partBEndBefore) {$\loadBefore{\last}$};
	
	\node[below right = \distanceBelowRightGraph of partBstart, xtick, anchor=north] (partBStart) {$\loadAt{\first}$};
	\node[below right = \distanceBelowRightGraph of partBs0, xtick, anchor=north] (partBS0) {$\loadAt{s_0}$};
	\node[below right = \distanceBelowRightGraph of partBe0, xtick, anchor=north] (partBE0) {$\loadAt{e_0}$};
	\node[below right = \distanceBelowRightGraph of partBend, xtick, anchor=north] (partBEnd) {$\loadAt{\last}$};
	
	\coordinate (partBStart-after) at ($(partBStart)!0.5!(partBS0Before)$);
	\coordinate (partBS0-after) at ($(partBS0)!0.5!(partBE0Before)$);
	\coordinate (partBE0-after) at ($(partBE0)!0.5!(partBEndBefore)$);
	
	\node[xtick] at (partBStart-after) {$\loadAfter{\first}$};
	\node[xtick] at (partBS0-after) {$\loadAfter{s_0}$};
	\node[xtick] at (partBE0-after) {$\loadAfter{e_1}$};
	
	\coordinate[above left = 0.25cm and -0.1cm of partBS0] (partBA0-begin);
	\coordinate[above right = 0cm and -0.1cm of partBE0Before] (partBA0-end);
	\fill [fill=colorActivity0, fill opacity=1] (partBA0-begin) rectangle (partBA0-end);
	\coordinate (partBA0-center) at ($ (partBA0-begin)!0.5!(partBA0-end) $);
	\node at (partBA0-center) {\scriptsize{0}};
	%
	%
	%
	\coordinate[above left = 0cm and 0cm of partBStart] (partBorigin);
	\coordinate[above right = 0cm and 0cm of partBEnd] (partBendingSequence);
	\draw[->, thick] (partBorigin) -- (partBendingSequence);
	\node[right=0cm of partBendingSequence] (partBlabel-sequence) {$\seq{S}$};
	%
	\coordinate[above = 1.0cm of partBorigin] (partBendingCapacity);
	\draw[->, thick] (partBorigin) -- (partBendingCapacity);
	\node[left =0cm of partBendingCapacity] (partBlabel-load) {Load};
	\coordinate[above= 0.5cm of partBorigin] (partBcoord2);
	\coordinate[above= 0.25cm of partBorigin] (partBcoord1);
	\node[left=0.1cm of partBcoord2] (partBlabel2) {\footnotesize{2}};
	\node[left=0.1cm of partBcoord1] (partBlabel1) {\footnotesize{1}};
	\draw (partBcoord2) -- (partBlabel2);
	\draw (partBcoord1) -- (partBlabel1);
	
\end{tikzpicture}
    }{
    Two load profiles for different sequences, with a capacity of 2.
    With $\loadAfter{s_0}$, we detect that an activity of load 2 can be put between $s_0$ and $e_1$ on the left but cannot be put between $s_0$ and $e_0$ on the right.
        \label{fig:cumulative-load-3-values}
    }{}
\end{figure}

\section{DARP}
\label{sec:darp}

The DARP is defined over a graph $G = (V,E)$, the nodes being transportation points.
A distance matrix $\bm{d}$ indicates the distance between nodes, and the objective is to minimize the total routing cost.
The $K$ available vehicles depart from a common depot, fulfill a subset of the $R$ transportation requests in the problem, and return to the common depot.
Each request $\bm{r}_i \in R$ is composed of a pickup $\pick{r_i} \in V$ and its corresponding drop-off location $\drop{r_i} \in V$.
Several constraints restrict the types of travel that can be performed.
Each node $v_i \in V$ to be visited (\textit{i.e.} the depot, each pickup and drop location) has a service duration $\bm{s}_i$, and a given time window: the visit must occur within $[a_i, b_i]$.
Furthermore, the ride time a customer $\bm{r}_i \in R$ spends in its vehicle is limited, ensuring the time from pickup $\pick{r_i}$ to drop off $\drop{r_i}$ does not exceed a predefined time limit $\bm{t}_i > 0$.
In addition to restricting the ride time, the route duration is also constrained: the total time between departing and returning to the depot cannot exceed a set limit $\bm{t}^d > 0$.
Finally, processing a transportation request $\bm{r}_i \in R$ consumes a load of $\bm{q}_i > 0$ in a vehicle, whose limited capacity $c$ cannot be exceeded.

In both the sequence model and the Minizinc model, the set of nodes $V$ and the distance matrix $d$ are extended to introduce nodes representing the start and end vertices of the vehicles, which are duplicates of the depot.
The values of $C_1, C_2$ in \eqref{eq:darp-insertion-cost} were taken from \cite{lnsffpa} and set to 80 and 1, respectively.

\subsection{Minizinc Successor Model}

The successor model is almost the same as in \cite{berbeglia2011checking}.
Similarly to the model using sequence variables in \eqref{eq:darp-objective}-\eqref{eq:darp-max-route-duration}, the set of nodes is extended to introduce the start and end nodes of the vehicles, and the time windows are modeled by an integer variable $\textbf{\textit{Time}}_v$ for each vertex $v \in V$.
Their domains are also initialized based on the time windows from the instance.
Each node $v$ has an associated successor variable $\textbf{\textit{Succ}}_v$, whose domain is all the nodes of the problem, except for the end depots.
For each vehicle $k \in K$, the successor of its end depot $\last_k$ is set to the start depot $\first_{k+1}$ of the following vehicle $k+1$.
The load at each node is tracked in an integer variable $\textbf{\textit{Load}}_v$, with domain ${0\dots c}$, and fixed to 0 in case of depot nodes.
Moreover, an array $\bm{l}$ for the load change of each node is created as:

\begin{equation}
    \bm{l}_v = \begin{cases}
        \bm{q}_{i} & \text{ if } v = \pick{r_i} \\
        - \bm{q}_{i} & \text{ if } v = \drop{r_i} \\
        0 & \text{ otherwise }
    \end{cases}
\end{equation}
Which describes a positive load change for a pickup, negative for a delivery, and null for a depot node.
Finally, an integer variable $\textbf{\textit{Vehicle}}_v$ represents the vehicle that visits a node $v \in V$.
Its domain is initialized in $\{0 \dots |K| - 1\}$, except for the start and end depots, where it is set to the id of the corresponding vehicle.
The model is as follows:

\begin{equation}
    \min \sum_{v \in V}{\textbf{\textit{Dist}}_v} \label{eq:minizinc-darp-objective}
\end{equation}
subject to:
\begin{align}
    & \mathrm{circuit}(\textbf{\textit{Succ}}) \label{eq:darp-circuit} \\
    & \textbf{\textit{Dist}}_v = \bm{d}_{v, \textbf{\textit{Succ}}_v} &\quad \forall v \in V \label{eq:minizinc-darp-element-distance}\\
    & \textbf{\textit{Time}}_v + \textbf{\textit{s}}_v + \textbf{\textit{Dist}}_v \leq \textbf{\textit{Time}}_{\textbf{\textit{Succ}}_v} &\quad \forall v \in V^{\setminus \last} \label{eq:minizinc-darp-time-channeling}\\
    & \textbf{\textit{Load}}_v + \textbf{\textit{l}}_v = \textbf{\textit{Load}}_{\textbf{\textit{Succ}}_v}  &\quad \forall v \in V \label{eq:minizinc-darp-load}\\
    & \textbf{\textit{Time}}_{\drop{r_i}} - \textbf{\textit{Time}}_{\pick{r_i}} - \bm{s}_{\pick{r_i}} \leq \bm{t}_i &\quad \forall r_i \in R \label{eq:minizinc-darp-max-ride-time}\\
    & \textbf{\textit{Time}}_{\last_k} - \textbf{\textit{Time}}_{\first_k} \leq \bm{t}^d &\quad \forall k \in K \label{eq:minizinc-darp-max-route-duration} \\
    & \textbf{\textit{Time}}_{\pick{r_i}} + \bm{s}_{\pick{r_i}} + \bm{d}_{\pick{r_i}, \drop{r_i}} \leq \textbf{\textit{Time}}_{\drop{r_i}} &\quad \forall r_i \in R \label{eq:minizinc-darp-min-time-pick-to-drop}\\
    & \textbf{\textit{Vehicle}}_v = \textbf{\textit{Vehicle}}_{\textbf{\textit{Succ}}_v} &\quad \forall v \in V^{\setminus \last} \label{eq:minizinc-darp-vehicle-channeling}\\
    & \textbf{\textit{Vehicle}}_{\pick{r_i}} = \textbf{\textit{Vehicle}}_{\drop{r_i}} &\quad \forall r_i \in R \label{eq:minizinc-darp-same-vehicle-request}
\end{align}

Where $V^{\setminus \last}$ denotes all nodes in $V$, except for nodes corresponding to end depots.
The objective is to minimize the total distance \eqref{eq:minizinc-darp-objective}, which is summed over the distance between each node and its successor \eqref{eq:minizinc-darp-element-distance}.
Due to the duplications of the start and end depots, the circuit constraint can be used \eqref{eq:minizinc-darp-element-distance}.
The time between each node and its successor is constrained in \eqref{eq:minizinc-darp-time-channeling}, except for the successors of the end depots.
The load at each node is tracked with \eqref{eq:minizinc-darp-load}.
The constraints \eqref{eq:minizinc-darp-max-ride-time} and \eqref{eq:minizinc-darp-max-route-duration} enforce the max ride and maximum route duration, respectively, and are the exact same constraints as \eqref{eq:darp-max-ride-time} and \eqref{eq:darp-max-route-duration}.
The constraint \eqref{eq:minizinc-darp-min-time-pick-to-drop} ensures that the visit time of a drop is at least the visit time of its pickup and the travel between the two nodes.
They are not necessary for correctness, but help in filtering the time windows.
Lastly, the vehicle passing at a node also passes at its successor \eqref{eq:minizinc-darp-vehicle-channeling} and must be the same between each pickup and delivery pair \eqref{eq:minizinc-darp-same-vehicle-request}. 

Search annotations were provided, as they greatly improve the quality of solutions. 
They tell to branch first on the successor variables \textbf{\textit{Succ}} using the DomWDeg heuristics \cite{boussemart2004boosting}, which prioritizes variables associated with small domains and a large number of failures encountered.
The LNS consists in keeping 85\% of variables to their value in a previous solution, and is restarted after a constant number of 10.000 search nodes.

\subsection{Results per instance}
\label{sec:darp:detailed-results}

Table \ref{tab:darp-lns} shows the performance of the different solvers for each instance.

\begin{table}[!ht]
    \TABLE{
        Primal gap to the best known solution (bks) in percentage on the DARP instances.
        Best results are in bold.
        \label{tab:darp-lns}
    }{\begin{tabular}{lrrrrrrrrrrrrrrr}
\toprule
 & & & & \multicolumn{2}{c}{Succ} & \multicolumn{2}{c}{Succ-LNS} & \multicolumn{2}{c}{CPO} & \multicolumn{2}{c}{OR-Tools} & \multicolumn{2}{c}{Hexaly} & \multicolumn{2}{c}{Seqvar}\\
 \cmidrule(lr){5-6}\cmidrule(lr){7-8}\cmidrule(lr){9-10}\cmidrule(lr){11-12}\cmidrule(lr){13-14}\cmidrule(lr){15-16}
Instance & $k$ & $|R|$ & bks & Avg & Min & Avg & Min & Avg & Min & Avg & Min & Avg & Min & Avg & Min\\
\midrule
R1a & 3 & 24 & 190.02 & 22.34 & 22.34 & 5.11 & 3.19 & 6.50 & 4.05 & - & - & 3.63 & 0.97 & \textbf{0.00} & \textbf{0.00} \\
R1b & 3 & 24 & 164.46 & 58.38 & 58.38 & 6.47 & 2.51 & 14.64 & 11.64 & 1.78 & 1.78 & 5.08 & 2.67 & \textbf{0.40} & \textbf{0.00} \\
R7a & 4 & 36 & 291.71 & - & - & 12.68 & 7.29 & 14.90 & 4.09 & - & - & 5.37 & 1.62 & \textbf{1.96} & \textbf{1.08} \\
R7b & 4 & 36 & 248.21 & - & - & 11.60 & 2.83 & 14.46 & 9.09 & - & - & 7.75 & 5.27 & \textbf{3.14} & \textbf{0.74} \\
R2a & 5 & 48 & 301.34 & - & - & 10.30 & 3.83 & - & - & - & - & 8.12 & 3.79 & \textbf{1.71} & \textbf{0.29} \\
R2b & 5 & 48 & 295.66 & 74.41 & 74.41 & 10.41 & 5.25 & 19.26 & 12.66 & \textbf{2.52} & 2.52 & 9.27 & 5.56 & 4.89 & \textbf{2.33} \\
R8a & 6 & 72 & 487.84 & - & - & - & - & - & - & - & - & 15.09 & 10.12 & \textbf{7.28} & \textbf{3.07} \\
R8b & 6 & 72 & 458.73 & - & - & - & - & - & - & 10.15 & 10.15 & 16.31 & 11.98 & \textbf{6.04} & \textbf{4.45} \\
R3a & 7 & 72 & 532.00 & - & - & - & - & - & - & - & - & 12.14 & 7.80 & \textbf{4.62} & \textbf{1.80} \\
R3b & 7 & 72 & 484.83 & - & - & - & - & - & - & 8.67 & 8.58 & 13.97 & 9.41 & \textbf{7.01} & \textbf{2.49} \\
R9a & 8 & 108 & 653.94 & - & - & - & - & - & - & - & - & 31.11 & 24.29 & \textbf{9.94} & \textbf{6.79} \\
R9b & 8 & 108 & 592.23 & - & - & - & - & - & - & - & - & 18.34 & 14.68 & \textbf{9.93} & \textbf{7.82} \\
R4a & 9 & 96 & 570.25 & - & - & - & - & - & - & 9.62 & 9.62 & 19.10 & 15.21 & \textbf{8.40} & \textbf{3.54} \\
R4b & 9 & 96 & 529.33 & - & - & - & - & 45.92 & 28.99 & 12.78 & 12.78 & 17.75 & 13.71 & \textbf{9.94} & \textbf{7.92} \\
R10a & 10 & 144 & 845.47 & - & - & - & - & - & - & - & - & 27.61 & 21.05 & \textbf{11.27} & \textbf{9.17} \\
R10b & 10 & 144 & 783.81 & - & - & - & - & - & - & - & - & 27.95 & 23.14 & \textbf{12.70} & \textbf{9.78} \\
R5a & 11 & 120 & 625.64 & - & - & - & - & - & - & 15.87 & 15.87 & 16.73 & 13.76 & \textbf{10.82} & \textbf{8.00} \\
R5b & 11 & 120 & 573.56 & - & - & - & - & - & - & 13.72 & 13.72 & 17.32 & 14.25 & \textbf{9.45} & \textbf{5.36} \\
R6a & 13 & 144 & 783.78 & - & - & - & - & - & - & 15.33 & 15.33 & 17.96 & 13.76 & \textbf{11.04} & \textbf{7.83} \\
R6b & 13 & 144 & 725.22 & - & - & - & - & - & - & 15.23 & 15.19 & 18.60 & 14.61 & \textbf{9.95} & \textbf{8.40} \\
\bottomrule
\end{tabular}
}{}
\end{table}

\end{APPENDICES}

\end{document}